\newcommand{\todob}[2][]{\todo[color=red!20,size=\tiny,inline,#1]{B: #2}}
\Crefname{corollary}{Corollary}{Corollaries}
\Crefname{proposition}{Proposition}{Propositions}
\Crefname{theorem}{Theorem}{Theorems}
\Crefname{definition}{Definition}{Definitions}
\Crefname{assumption}{Assumption}{Assumptions}
\Crefname{example}{Example}{Examples}
\Crefname{remark}{Remark}{Remarks}
\Crefname{setting}{Setting}{Settings}
\Crefname{lemma}{Lemma}{Lemmas}
\declaretheorem[name=Theorem,refname={Theorem,Theorems},Refname={Theorem,Theorems}]{theorem}
\declaretheorem[name=Lemma,refname={Lemma,Lemmas},Refname={Lemma,Lemmas},sibling=theorem]{lemma}
\declaretheorem[name=Proposition,refname={Proposition,Propositions},Refname={Proposition,Propositions},sibling=theorem]{proposition}
\newcommand{\E}[1]{\mathbb{E} \left[#1\right]}
\newcommand{\condE}[2]{\mathbb{E} \left[#1 \,\middle|\, #2\right]}
\newcommand{\I}[1]{\mathds{1} \! \left\{#1\right\}}
\mathchardef\mhyphen="2D
\newcommand{\reucb}{\ensuremath{\tt ReUCB}\xspace}
\newcommand{\reucbs}{\ensuremath{\tt ReUCB}$^*$\xspace}
\newcommand{\reucbss}{\ensuremath{\tt ReUCB}$^{**}$\xspace}
\newcommand{\reucbsinf}{\ensuremath{\tt ReUCB}$^{\infty}$\xspace}
\newcommand{\ts}{\ensuremath{\tt TS}\xspace}
\newcommand{\tsm}{\ensuremath{\tt TSm}\xspace}
\newcommand{\tsma}{\ensuremath{\tt TSm1}\xspace}
\newcommand{\tsmb}{\ensuremath{\tt TSm2}\xspace}
\newcommand{\ucb}{\ensuremath{\tt UCB1}\xspace}
\newcommand{\bayesucb}{\ensuremath{\tt Bayes\mhyphen UCB}\xspace}
\newcommand{\klucb}{\ensuremath{\tt KL\mhyphen UCB}\xspace}
\begin{document}

\twocolumn[

\aistatstitle{Random Effect Bandits}

\aistatsauthor{Rong Zhu \And Branislav Kveton}

\aistatsaddress{Institute of Science and Technology for Brain-Inspired  Intelligence \\
Fudan University \And
Amazon$^*$}]

\begin{abstract}
This paper studies regret minimization in a multi-armed bandit. It is well known that side information, such as the prior distribution of arm means in Thompson sampling, can improve the statistical efficiency of the bandit algorithm. While the prior is a blessing when correctly specified, it is a curse when misspecified. To address this issue, we introduce the assumption of a random-effect model to bandits. In this model, the mean arm rewards are drawn independently from an unknown distribution, which we estimate. We derive a random-effect estimator of the arm means, analyze its uncertainty, and design a UCB algorithm \reucb that uses it. We analyze \reucb and derive an upper bound on its $n$-round Bayes regret, which improves upon not using the random-effect structure. Our experiments show that \reucb can outperform Thompson sampling, without knowing the prior distribution of arm means.
\end{abstract}

\section{INTRODUCTION}
\label{sec:introduction} 

We study stochastic multi-armed bandits \citep{LR:85,Auer:02,lattimore:19}, where the learning agent sequentially takes actions in order to maximize its cumulative reward. As the agent learns through experience, it faces a trade-off between exploration and exploitation: exploiting actions that maximize immediate rewards, as estimated by its current model; or improving its future rewards by exploring and learning a better model. Side information, such as the \emph{prior distribution} of arm means in \emph{Thompson sampling (TS)} \citep{T:33,CL:11,Agrawal12,AG:13,Russo-Van:14,AL:17}, can improve the statistical efficiency of the bandit algorithm and make it more practical.

\renewcommand{\thefootnote}{\fnsymbol{footnote}}
\footnotetext[1]{The work started while being at Google Research.}
\renewcommand{\thefootnote}{\arabic{footnote}}

While the prior is a blessing when correctly specified, a misspecified prior is a curse. Take online advertising as an example. It is well known that click probabilities of ads are low. Therefore, when estimating the click probability of a cold-start ad, it is important to model this structure. One approach would be Bayesian modeling, where the prior distribution is beta with a low mean. The shortcoming of this approach is that the prior needs to be specified, and is potentially misspecified. Therefore, design of bandit algorithms that depend less on exact priors is an important direction. 

To address this issue, we study \emph{random-effect models} \citep{Henderson:75,Robinson:91} in the bandit setting, and refer to the setting as a \emph{random-effect bandit}. Random-effect models were developed in statistics and econometrics \citep{Diggle:13,Wooldridge:01}, and are frequentist counterparts of hierarchical Bayesian models \citep{Carlin-Louis}. In our model, the \emph{arm means are sampled i.i.d.\ from a fixed unknown distribution}. The estimator of arm means is a weighted sum of two terms. The first term is the average of observed rewards of the arm. The second term estimates the common mean from all observations. The weights are chosen adaptively based on data, and balance the common mean estimate with that of the specific arm. Due to this structure, the resulting estimator of arm means is more statistically efficient than in the classical setting.

Our proposed bandit algorithm uses \emph{upper confidence bounds (UCBs)}, which is a popular approach to exploration with guarantees \citep{LR:85,Auer:02,Audibert09,Garivier:11}. In round $t \in [n]$, it pulls the arm with the highest UCB, observes its reward, and then updates its estimated arm means and their high-probability confidence intervals. The main difference from the classical algorithms is that all estimates are based on the random-effect model. Our method is essentially a random-effect \ucb \citep{Auer:02}, and thus we call it \reucb. 

Since our arm means are stochastic, \reucb is related to both TS and \bayesucb \citep{KCG:12}, which rely on posterior distributions. TS is popular in practice, but the assumption of knowing the prior exactly is rarely satisfied. In \reucb, we do not require that the prior is \emph{fully specified}, and thus we relax this assumption.

We make the following contributions. First, we introduce the assumption of random-effect models to multi-armed bandits, and properly formulate the corresponding bandit problem. Second, we propose a UCB-like algorithm for this problem, which we call \reucb. \reucb estimates arm means using the \emph{best linear unbiased predictor (BLUP)} \citep{Henderson:75,Robinson:91}, a method of estimating random effects without assumptions on distributions. The BLUP estimates leverage the structure of our problem and yield tighter confidence intervals than those of \ucb. Third, we analyze \reucb and derive an upper bound on its $n$-round Bayes regret \citep{Russo-Van:14} that reflects the structure of our problem. The main challenge in our regret analysis is the underspecified prior. Specifically, \reucb estimates the distribution of arm means from all observations and then uses it to estimate the mean of each arm. As a result, the estimated arm means are correlated, unlike in a typical multi-armed bandit. Finally, we evaluate \reucb empirically on a range of problems, such as Gaussian and Bernoulli bandits, and a movie recommendation problem. We observe that \reucb outperforms or is comparable to TS while using less prior knowledge.

\section{RANDOM-EFFECT BANDITS}
\label{sec:setting}

We study a stochastic $K$-armed bandit \citep{LR:85,Auer:02,lattimore:19} where the number of arms can be large but finite. Because the mean rewards of some arms may not be reliably estimated due to many arms, it is challenging to explore all suboptimal arms efficiently. To overcome this challenge, we introduce a novel modeling assumption to multi-armed bandits.

We assume that the \emph{mean reward} of arm $k \in [K]$ follows a \emph{random-effect model} 
\begin{equation}
  \label{payoff-M1}
  \mu_k
  = \mu_0 + \delta_k\,,
\end{equation}
where $\mu_0$ is a common mean, $\delta_k \sim P^{(\mu)}(0, \sigma_0^2)$ is a random offset from that mean, and $P^{(\mu)}(0,\sigma_0^2)$ is a distribution with zero mean and variance $\sigma_0^2$. Thus $\mu_k$ is a random variable with mean $\mu_0$ and variance $\sigma_0^2$. With a lower variance, the differences among the arms are smaller. We improve over traditional bandit designs \citep{Auer:02} by using the stochasticity of $\mu_k$. Unlike in Thompson sampling \citep{T:33,CL:11,Russo-Van:14} or \bayesucb \citep{KCG:12}, we do not assume that the prior of arm means is \emph{conjugate} or \emph{fully specified}. We only require that $P^{(\mu)}(0,\sigma_0^2)$ has a finite second-order moment.

The reward of arm $k$ after the $j$-th pull is denoted by $r_{k,j}$ and we assume that it is generated i.i.d.\ as
\begin{equation}
  \label{payoff-R}
  r_{k,j}
  \sim P^{(r)}(\mu_k, \sigma^2)\,,
\end{equation}
where $P^{(r)}(\mu_k, \sigma^2)$ is a distribution with mean $\mu_k$ and variance $\sigma^2$. Similarly to $P^{(\mu)}(0, \sigma_0^2)$ in \eqref{payoff-M1}, we only require that its second-order moment is finite.

Our bandit has $K$ arms and a horizon of $n$ rounds. Before the first round, the mean reward of each arm is generated according to \eqref{payoff-M1}. In round $t \in [n]$, the agent pulls arm $I_t \in [K]$ and observes its stochastic reward, drawn according to \eqref{payoff-R}. For any arm $k$ and round $t$, we denote by $n_{k,t}$ the number of pulls of arm $k$ up to round $t$, and by $r_{k,1}, \dots, r_{k,n_{k,t}}$ the sequence of associated rewards. We call this problem a \emph{random-effect bandit}.

\section{MODEL ESTIMATION}
\label{sec:model estimation}

This section describes our estimators of arms. In \cref{sec:estimation-1}, we estimate $\mu_k$ under the assumption that $\mu_0$ is known. In \cref{sec:estimation-2}, we provide an estimator for $\mu_k$ when $\mu_0$ is unknown. Additionally, we show how to estimate the variance parameters $\sigma_0^2$ and $\sigma^2$ in \cref{sec:estimation-3}. Because this section is devoted to estimating means and their variances at a fixed round $t$, we drop subindexing by $t$ to reduce clutter.

\subsection{Estimating $\mu_k$ When $\mu_0$ Is Known}
\label{sec:estimation-1}

We estimate $\mu_k$ using the \emph{best linear unbiased prediction (BLUP)}, which is a common method for estimating random effects \citep{Henderson:75,Robinson:91}. The BLUP estimator of $\mu_k$ minimizes the mean squared error among the class of linear unbiased estimators that do not depend on the distribution of model error.

We call the sample mean of arm $k$ its \emph{direct estimator}, and define it as $\bar{r}_{k} = n_k^{-1}\sum_{j=1}^{n_{k}}r_{k,j}$. From \eqref{payoff-R}, we get that $\text{Var}(\bar{r}_{k})=n_k^{-1}\sigma^2$. We improve upon this estimator with a class of linear unbiased estimators of form
$$\breve{\mu}_{k}:=\mu_0+a(\bar{r}_{k}-\mu_0)\,,$$
where $a\in \mathbb{R}$ is a to-be-optimized coefficient. Since $\mu_k$ is random rather than fixed, BLUP minimizes the mean squared error of $\breve{\mu}_{k}$ with respect to $\mu_k$, which is $\min_a\E{(\breve{\mu}_{k}-\mu_k)^2}$. Note that 
\begin{align}
\E{(\breve{\mu}_{k}-\mu_k)^2} 
& = \E{[a (\bar{r}_k  - \mu_k) + (1 - a) (\mu_0 - \mu_k)]^2}
\notag\\
& = a^2n_{k}^{-1}\sigma^2+(1-a)^2\sigma_0^2\,,
\label{mse-a}
\end{align} 
where the last equality is from \eqref{payoff-M1} and \eqref{payoff-R}, and that the reward noise is independent of $\delta_k$. When \eqref{mse-a} is minimized with respect to $a$, the optimal value of $a$ is
\begin{equation}\label{w-def}
w_{k}=\sigma_0^2/(\sigma_0^2+n_k^{-1}\sigma^2)=1/(1+n_k^{-1}\sigma^2/\sigma_0^2)\,.
\end{equation}
Thus, if $\mu_0$, $\sigma_0^2$, and $\sigma^2$ were known; and we plugged our derived $w_k$ into the definition of $\breve{\mu}_k$, we would get the following BLUP estimator of $\mu_k$
\begin{equation}\label{mu-estimator}
\tilde{\mu}_{k}=\mu_0+w_{k}(\bar{r}_{k}-\mu_0)
=(1-w_{k})\mu_0+w_{k}\bar{r}_{k}\,.
\end{equation}
From \eqref{w-def}, we have that $\sigma^{-2}\sigma_0^2\leq w_{k}<1$ for $n_k\geq 1$, and that $w_k \to 1$ as $n_k$ increases. We also have 
\begin{equation}\label{wto1w}
w_k n_k^{-1}\sigma^2 = (1 - w_k)\sigma_\mu^2\,.
\end{equation}
These properties are important in our analysis.

The estimator $\tilde{\mu}_{k}$ in \eqref{mu-estimator} is biased. The degree of this bias depends on both $n_{k}$ and $\sigma^2 / \sigma_0^2$ in \eqref{w-def}. If the arm has not been pulled enough, $w_k$ is low and $\tilde{\mu}_{k}$ is biased towards $\mu_0$. So we are not as aggressive in exploring as if $w_k = 1$. As the arm is pulled more, $w_k \to 1$ and the bias reduces to zero. When $\sigma_0^2$ decreases, the gaps among the arms decrease, and the effect of $\mu_0$ increases. Similarly, as $\sigma^2$ increases, the uncertainty in the direct estimator $\bar{r}_k$ increases, and so does the effect of $\mu_0$.

Now we set $a=w_k$ in \eqref{mse-a} and get
\begin{align}\label{MSE-term1}
\E{(\tilde{\mu}_{k}-\mu_k)^2}
   & = w_k^2n_k^{-1}\sigma^2+(1-w_k)^2\sigma_0^2
    = w_{k}n_k^{-1}\sigma^2\notag\\
    & =:\tilde{\tau}_k^2\,,
\end{align} 
where the last step is from \eqref{wto1w}. As $\text{Var}(\bar{r}_k)=n_k^{-1}\sigma^2$,  \eqref{MSE-term1} shows that $\tilde{\mu}_{k}$ is a better estimator of $\mu_k$ than $\bar{r}_k$, since $w_k<1$.

\subsection{Estimating $\mu_k$ When $\mu_0$ Is Unknown} 
\label{sec:estimation-2}

When $\sigma_0^2$ and $\sigma^2$ are known, the mean of arm means $\mu_0$ can be estimated by the generalized least squares estimator \citep{Rao:01}. That estimator is
\begin{equation}\label{r0-estimator}
\bar{r}_{0}=\left[\sum\nolimits_{k=1}^K(1-w_{k})n_{k}\right]^{-1} \sum\nolimits_{k=1}^K(1-w_{k})n_{k}\bar{r}_{k}
\end{equation}
and we derive it in \cref{sec:r0-estimator}. The estimator is more statistically efficient than the ordinary least squares because it weights the mean estimates of individual arms by their heteroscedasticity. Since $(1-w_{k})n_{k}=\sigma^2/(\sigma_0^2+n_k^{-1}\sigma^2)\rightarrow \sigma_0^{-2}\sigma^2$ as $n_k\rightarrow\infty$, we get $\bar{r}_{0}-K^{-1}\sum\nolimits_{k=1}^K\mu_k\rightarrow 0$ as $n_k\rightarrow\infty$ for all $k$. This means that $\bar{r}_0$ is a consistent estimator of $\mu_0$.

Now we plug the estimator $\bar{r}_{0}$ of $\mu_0$ into \eqref{mu-estimator} and get a \emph{synthetic estimator} of $\mu_k$,
\begin{equation}\label{blup}
\hat{\mu}_{k}=(1-w_{k})\bar{r}_{0}+w_{k}\bar{r}_{k}\,.
\end{equation}
The key point underlying the synthetic estimator is the weight $w_{k}$, which automatically balances variation among the arms and the uncertainty of $\bar{r}_{k}$. The variance of $\hat{\mu}_{k}$ is
\begin{align}\label{MSE}
\E{(\hat{\mu}_{k}-\mu_k)^2}
&=w_{k} n_k^{-1} \sigma^2 + \frac{(1-w_{k})^2}{\sum\nolimits_{k=1}^K n_{k} (1-w_{k})}  \sigma^2\notag\\
&=: \tau_{k}^2\,.
\end{align}
The derivation of \eqref{MSE} is in \cref{sec:derivation-mse}. The classical estimator of arm means in multi-armed bandits can be compared to that in random-effect bandits as follows.

\begin{proposition}\label{compare-tauANDmab}
For any arm $k \in [K]$, and any $\sigma^2 > 0$ and $n_k \geq 1$, we have $\tau_k^2< \sigma^2/n_k$.
\end{proposition}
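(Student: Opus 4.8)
The plan is to work directly from the closed form of $\tau_k^2$ in \eqref{MSE} and reduce the claimed inequality, by elementary algebra, to a transparent comparison of a single term against a sum. To sidestep the index clash in \eqref{MSE} (where $k$ is used both for the fixed arm and as the summation variable), I would write the normalizing sum as $S := \sum_{l=1}^K n_l (1 - w_l)$, so that
\begin{equation*}
\tau_k^2 = w_k n_k^{-1} \sigma^2 + \frac{(1-w_k)^2}{S}\, \sigma^2.
\end{equation*}

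First I would divide the target inequality $\tau_k^2 < \sigma^2 / n_k$ through by $\sigma^2 > 0$ and move the first summand to the right, reducing the goal to
\begin{equation*}
\frac{(1-w_k)^2}{S} < \frac{1}{n_k} - \frac{w_k}{n_k} = \frac{1-w_k}{n_k}.
\end{equation*}
Next, using the property $w_k < 1$ for $n_k \geq 1$ recorded after \eqref{mu-estimator} (so that $1 - w_k > 0$), I may cancel a single factor of $1 - w_k$ from both sides. This leaves the clean equivalent inequality $n_k (1 - w_k) < S$, i.e.\ the $k$-th term of the defining sum is strictly smaller than the whole sum.

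Finally I would observe that $S = \sum_{l=1}^K n_l(1-w_l)$ contains the term $n_k(1-w_k)$ together with the remaining terms $n_l(1-w_l)$ for $l \neq k$. By \eqref{w-def} each factor $w_l < 1$ and each pulled arm has $n_l \geq 1$, so every such summand is strictly positive; hence $S$ strictly dominates its $k$-th term and the inequality follows.

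The step that deserves the most care is this last one, since the claim is a \emph{strict} inequality. Strictness requires that $S$ genuinely exceed its $k$-th term, which in turn requires at least one other arm to have been sampled: in the degenerate case where only arm $k$ has data, $S = n_k(1-w_k)$ and $\hat{\mu}_k$ collapses to $\bar{r}_k$, yielding equality rather than strict inequality. I would therefore carry out the argument under the standing assumption that every arm has been pulled at least once — the regime in which \reucb operates after initialization — so that every summand $n_l(1-w_l)$ is strictly positive and $S > n_k(1-w_k)$.
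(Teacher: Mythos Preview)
Your proof is correct and follows essentially the same route as the paper's: both arguments reduce the claim to the inequality $n_k(1-w_k) < \sum_{l=1}^K n_l(1-w_l)$, which holds because every arm has $n_l \geq 1$ (hence each summand is strictly positive). The only cosmetic difference is that the paper computes the difference $\sigma^2/n_k - \tau_k^2$ and factors out $(1-w_k)n_k^{-1}\sigma^2$, whereas you divide through by $\sigma^2$ and cancel a factor of $1-w_k$; your explicit remark about the need for at least one other pulled arm matches the paper's tacit ``$n_i \geq 1$ for all $i$'' assumption.
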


The proof is in \cref{sec:proof-prop}. Proposition \ref{compare-tauANDmab} shows that $\tau_k^2$ is always lower than $\sigma^2 / n_k$ when $\sigma^2 > 0$, where the latter is the variance estimate in the classical bandit setting. In the worst case, for $\sigma^2 = 0$, we get $\tau_k^2= \sigma^2/n_k$, implying that the variance of $\hat{\mu}_{k}$ equals to that of $\bar{r}_k$. Thus, by using the synthetic estimator $\hat{\mu}_{k}$, we can be less optimistic than \ucb.

\section{ALGORITHM}
\label{sec:algorithm}

\begin{algorithm}[!t]
\caption{\reucb for random-effect bandits.}
\label{alg:arm-ucb}
\begin{algorithmic}[1]
\FOR {$t = 1, \dots, n$}
\FOR {$k = 1, \dots, K$}
\STATE $U_{k,t} \gets \hat{\mu}_{k, t} + c_{k, t}$
\ENDFOR
\STATE \textbf{if} $t \leq K$ \textbf{then} $I_t \gets t$
\STATE \phantom{\textbf{if} $t \leq K$}
\textbf{else} $I_t \gets \arg\max_{k \in [K]} U_{k, t}$
\STATE Pull arm $I_t$ and observe its reward $r_{I_t, n_{I_t, t} + 1}$
\STATE Update all statistics
\ENDFOR
\end{algorithmic}
\end{algorithm}

We propose a UCB algorithm for random-effect bandits. The key idea in UCB algorithms \citep{Auer:02,Audibert09} is to pull the arm with the highest sum of its mean reward estimate and a weighted standard deviation of that estimate. In the setting of \cref{sec:estimation-2}, the estimated mean reward of arm $k$ is $\hat{\mu}_k$ in \eqref{blup} and its variance is $\tau_k^2$ in \eqref{MSE}. Due to space constraints, we do not present the algorithm for the setting in \cref{sec:estimation-1}. In this case, $\hat{\mu}_k$ would be replaced by $\tilde{\mu}_k$ and $\tau_k^2$ would be replaced by $\tilde{\tau}_k^2$.

Our algorithm is presented in \cref{alg:arm-ucb} and we call it \reucb, which stands for \emph{random-effect UCB}. We subindex all statistics in \cref{sec:model estimation} with an additional $t$, to make clear that we refer to round $t$. As an example, $\hat{\mu}_{k, t}$ and $\tau_{k, t}^2$ are the respective values of \eqref{blup} and \eqref{MSE} at the beginning of round $t$. \reucb works as follows. It is initialized by pulling each arm once. The \emph{upper confidence bound (UCB)} of arm $k$ in round $t$ is
\begin{align*}
  U_{k, t}
  = \hat{\mu}_{k, t} + c_{k, t}\,,
\end{align*}
where $c_{k, t}= \sqrt{a \tau_{k, t}^2 \log t}$
is its uncertainty bonus and $a > 0$ is a tunable parameter. In \cref{sec:analysis}, we prove regret bounds for $a \geq 1$. In round $t$, \reucb pulls the arm with the highest UCB $I_t = \arg\max_{k \in [K]} U_{k, t}$. To break ties, any fixed rule can be used.

\subsection{Related Algorithm Designs}

\reucb extends \ucb to a better BLUP estimator. For $w_{k, t} = 1$ and $a = 1$, \reucb has a similar UCB to \ucb, $U_{k, t} = \bar{r}_{k, t} + \sqrt{n_{k, t}^{-1} \sigma^2 \log t}$. We call this algorithm \reucbsinf and evaluate it empirically in \cref{fig:Gaussian-Others} in \cref{sec:add-exp}. Our results show that \reucbsinf is comparable to TS, but worse than \reucb. This shows the benefit of our model. Specifically, the estimate of $\mu_k$ in \reucb borrows information from other arms. This increases its statistical efficiency (\cref{compare-tauANDmab}), since the confidence interval of $\mu_k$ in \reucb can be narrower than in the classical setting. Note that \reucb with $a=1$ reduces to \ucb only if all weights $w_{k,t}$ are one. This could happen only if all arms were pulled infinitely often. So \reucb with $a=1$ does not behave like \ucb.

Due to assuming random arm means, \reucb is related to both TS \citep{T:33,CL:11,Russo-Van:14} and \bayesucb \citep{KCG:12}. Both \bayesucb and TS maintain posterior distributions. The computation of the posteriors requires that the mean of the prior $\mu_0$ is known. \reucb employs an alternative random-effect estimator that does not need it.

\citet{Li:11} proposed a hybrid model, where some coefficients are shared by all arms. However, this model is still traditional in the sense that the coefficients that are not shared are estimated separately in each arm. \citet{Gupta:21} recently proposed correlated multi-armed bandits, where the learning agent knows an upper bound on the mean reward of each arm given the mean reward of any other single arm. Such side information could be derived in our setting. However, it is also clearly not as powerful as using the observations of all arms jointly, as in \eqref{blup} and \eqref{MSE}.

\section{REGRET ANALYSIS}
\label{sec:analysis}

We derive an upper bound on the $n$-round regret of \reucb. In our setting, $\mu_k$ are random variables. Under the assumption that $r_{k, j}\sim \mathcal{N}(\mu_k, \sigma^2)$ and $\mu_k\sim\mathcal{N}(\mu_0, \sigma_0^2)$, which is used in one of our analyses, $\hat{\mu}_{k,t}$ is the \emph{maximum a posteriori (MAP)} estimate of $\mu_k$ given history, meaning that $\hat{\mu}_{k, t}$ can be viewed as a Bayesian estimator. Because of that, we adopt the Bayes regret \citep{Russo-Van:14} to analyze \reucb. The main novelty in our analysis is addressing the unknown mean of the prior.

Let $H_t=(I_\ell, r_{I_\ell, n_{I_\ell, \ell} + 1})_{\ell = 1}^{t - 1}$  be the \emph{history} at the beginning of round $t$ and $I_t$ be the pulled arm in round $t$. The regret is the difference between the rewards we would have obtained by pulling the optimal arm $I_* = \arg\max_{i \in [K]} \mu_i$ and the rewards that we did obtain in $n$ rounds. Our goal is to bound the Bayes regret $R_n = \E{\sum\nolimits_{t=1}^n \mu_{I_*}-\mu_{I_t}}$, where the expectation is over stochastic rewards and random $\mu_1, \dots, \mu_K$. Our main result is stated below.

\begin{theorem}\label{them-R-Bayes}
Consider a $K$-armed Gaussian bandit with rewards $r_{k, j} \sim \mathcal{N}(\mu_k, \sigma^2)$ and $\mu_k \sim \mathcal{N}(\mu_0, \sigma_0^2)$. Let \reucb use $\sigma_0^2$ and $\sigma^2$. Then (1) for any $a \geq 1$, the $n$-round Bayes regret of \reucb is
\begin{align*}
  R_n
  & \leq 2 \sqrt{
  \frac{a \log(1 + \sigma^{-2} \sigma_0^2 n)}{\log(1 + \sigma^{-2} \sigma_0^2)}
  \left(1 + \frac{\sigma^2}{K \sigma_0^2}\right) \sigma_0^2 K n \log n} \\
  &\quad +\frac{K\sigma_0^2+\sigma^2}{\sigma_0^2}\sqrt{\frac{8n\sigma_0^{2}\sigma^2}{\pi(\sigma_0^2+\sigma^2)}}.
\end{align*} 
(2) for any $a \geq 2$, the $n$-round Bayes regret of \reucb is obtained by replacing the last term above with $(1 + \log n)(K+\sigma^2\sigma_0^{-2})\sqrt{2 \sigma_0^2\sigma^2 / (\pi (\sigma^2 + \sigma_0^2))}$. 
\end{theorem}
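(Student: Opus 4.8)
The plan is to bound the Bayes regret by the standard UCB decomposition into a "concentration" term (regret incurred when the confidence intervals hold) and a "failure" term (regret incurred when they fail), tailored to the random-effect structure.

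First I would set up the per-round regret decomposition. On the good event that $|\hat\mu_{I_*,t}-\mu_{I_*}|\le c_{I_*,t}$ and $|\hat\mu_{I_t,t}-\mu_{I_t}|\le c_{I_t,t}$, optimism gives the standard chain $\mu_{I_*}-\mu_{I_t}\le 2c_{I_t,t}=2\sqrt{a\tau_{I_t,t}^2\log t}$, so summing over rounds and applying Cauchy--Schwarz yields a bound of the form $2\sqrt{a\log n\cdot\sum_t \tau_{I_t,t}^2}$. The key structural step is then to control $\sum_{t=1}^n \tau_{I_t,t}^2$. Using the definition $\tau_{k,t}^2 = w_{k,t}n_{k,t}^{-1}\sigma^2 + (1-w_{k,t})^2(\sum_k n_{k,t}(1-w_{k,t}))^{-1}\sigma^2$ from \eqref{MSE}, I would group the sum by arm and telescope over pulls of each arm. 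The first term, summed as an arm is pulled $n_k$ times, should collapse via the identity \eqref{wto1w} into something like $\sigma_0^2\sum_k \log(1+\sigma^{-2}\sigma_0^2 n_k)$, and a potential/elliptical-style argument (converting the sum of variances into a log-determinant) produces the $\log(1+\sigma^{-2}\sigma_0^2 n)/\log(1+\sigma^{-2}\sigma_0^2)$ factor. Maximizing $\sum_k\log(1+\sigma^{-2}\sigma_0^2 n_k)$ subject to $\sum_k n_k=n$ by concavity (Jensen) gives the $\sigma_0^2 K n$ scaling and the $(1+\sigma^2/(K\sigma_0^2))$ correction from the common-mean estimation term.

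Next I would handle the failure term. Since rewards and arm means are Gaussian, $\hat\mu_{k,t}-\mu_k$ is (conditionally) Gaussian with variance $\tau_{k,t}^2$, so the probability that the bonus $c_{k,t}=\sqrt{a\tau_{k,t}^2\log t}$ fails is a Gaussian tail $\sim t^{-a/2}$. For part~(1) with $a\ge 1$ the resulting sum $\sum_t t^{-1/2}\sim\sqrt n$ is only polynomially small, which is why that term carries a $\sqrt n$ factor; for part~(2) with $a\ge 2$ the sum $\sum_t t^{-1}\sim\log n$ is logarithmic, giving the improved $(1+\log n)$ term. The expected regret contributed on a failure is the tail probability times the expected gap; I would bound $\E{|\mu_{I_*}-\mu_{I_t}|}$ (or the relevant half-normal expectation) using that the arm means are $\mathcal N(\mu_0,\sigma_0^2)$ and rewards add variance $\sigma^2$, which is where the $\sqrt{2\sigma_0^2\sigma^2/(\pi(\sigma^2+\sigma_0^2))}$ half-normal constant and the $(K+\sigma^2\sigma_0^{-2})$ prefactor originate. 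Assembling the concentration term and the failure term gives the two stated bounds.

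The main obstacle I anticipate is the correlation among the estimators $\hat\mu_{k,t}$: because $\bar r_0$ aggregates all arms, the confidence events are \emph{not} independent across arms, so the usual per-arm union-bound argument must be replaced by a careful conditioning. Concretely, I expect to condition on the realized arm means $\mu_1,\dots,\mu_K$ (turning the Bayes regret into a frequentist bound under a fixed instance) and verify that, given the history, $\hat\mu_{k,t}$ is still conditionally Gaussian with the stated variance $\tau_{k,t}^2$ so that the MAP/Gaussian tail bound applies cleanly; the secondary difficulty is making the telescoping/log-determinant bound on $\sum_t\tau_{I_t,t}^2$ rigorous when $w_{k,t}$ changes with every pull, which I would address by bounding each increment and summing as a determinant potential in the spirit of the elliptical-potential lemma.
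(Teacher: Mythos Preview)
Your high-level decomposition and the treatment of the confidence-width term are essentially the paper's approach: bound $\tau_{k,t}^2 \leq (1+\sigma^2/(K\sigma_0^2))\,\tilde\tau_{k,t}^2$, apply Cauchy--Schwarz, and telescope $\sum_t 1/(\sigma_0^{-2}+\sigma^{-2}n_{I_t,t})$ into $K\log(1+\sigma^{-2}\sigma_0^2 n)$ via the elementary inequality $x \leq \frac{m}{\log(1+m)}\log(1+x)$ for $x\in[0,m]$. No Jensen step or elliptical-potential machinery is needed beyond this scalar telescoping.

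The failure term is where your plan diverges from the paper and contains a genuine confusion. You propose to ``condition on the realized arm means $\mu_1,\dots,\mu_K$ (turning the Bayes regret into a frequentist bound)''. This is the wrong direction: once you fix the $\mu_k$, the estimator $\hat\mu_{k,t}$ is $H_t$-measurable and there is no residual randomness to concentrate; moreover the gaps are then unbounded Gaussians and ``tail probability times expected gap'' does not close. The paper instead conditions only on $H_t$ and invokes the posterior lemma that, under a flat improper prior on $\mu_0$, one has $\mu_k \mid H_t \sim \mathcal N(\hat\mu_{k,t},\tau_{k,t}^2)$ exactly. The failure contribution is then a truncated-Gaussian first moment,
\[
\condE{(\hat\mu_{k,t}-\mu_k)\,\I{\hat\mu_{k,t}-\mu_k>c_{k,t}}}{H_t}
= \frac{1}{\sqrt{2\pi\tau_{k,t}^2}}\int_{c_{k,t}}^\infty x\,e^{-x^2/(2\tau_{k,t}^2)}\,dx
= \frac{\tau_{k,t}}{\sqrt{2\pi}}\,\delta_t,
\]
with $\delta_t=t^{-a/2}$. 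The constant $\sqrt{2\sigma_0^2\sigma^2/(\pi(\sigma_0^2+\sigma^2))}$ in the theorem therefore comes from bounding $\tau_{k,t}$ using $n_{k,t}\ge 1$, not from a half-normal expectation of $\mu_{I_*}-\mu_{I_t}$; and the prefactor $(K+\sigma^2\sigma_0^{-2})$ is just $K\beta$ with $\beta=1+\sigma^2/(K\sigma_0^2)$ after summing over $k$. Your worry about correlations among the $\hat\mu_{k,t}$ is likewise moot in this route: the truncated-moment bound is computed per arm from the marginal posterior and then summed over $k\in[K]$, so no independence is invoked.
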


\subsection{Discussion}
\label{sec:discussion}

Up to logarithmic factors, \cref{them-R-Bayes} shows that the $n$-round Bayes regret of \reucb is $O(K \sqrt{n})$ for $a \in [1, 2]$ and $O(\sqrt{K n})$ for $a \geq 2$. So the regret is sublinear in $n$ for any $a \geq 1$. Since both bounds increase in $a$, we suggest using $a = 1$, which performs extremely well in practice. Also note that the mean reward estimate in \eqref{blup} is a weighted sum of the estimate of $\mu_0$ (Term 1) and the per-arm reward mean (Term 2). The variance of the former is linear in $\sigma_0^2$, which gives rise to the linear dependence on $\sigma_0$ in Theorem \ref{them-R-Bayes}. Note that this dependence is standard in Bayes regret analyses \citep{Lu:19,Basu:21}, and it is due to using similar techniques in our proofs.

A Bayes regret lower bound exists for a $K$-armed bandit \citep{10.1214/aos/1176350495}. However, it has not been generalized to structured problems yet, including in seminal works on Bayes regret minimization \citep{Russo-Van:14}. Similarly, we also do not provide a matching lower bound in this work. Instead, we argue that our regret bound reflects the structure of our problem by comparing it to agents that use more information or less structure.

\cref{them-R-Bayes} is proved under the assumption that \reucb estimates $\mu_0$. Now consider a variant of \reucb where $\mu_0$ is known. This agent with more information can be analyzed similarly to \reucb. In this analysis, $\hat{\mu}_{k, t}$ and $\tau_{k, t}^2$ would be replaced by $\tilde{\mu}_{k, t}$ in \eqref{mu-estimator} and $\tilde{\tau}_{k, t}^2$ in \eqref{MSE-term1}, respectively. The resulting regret bound would be the same as in \cref{them-R-Bayes}, except for the extra factor of $1 + \sigma^2 / (K \sigma_0^2)$. Therefore, this factor can be viewed as the price for learning $\mu_0$. As it is $O(1 + 1 / K)$, its impact on the Bayes regret of \reucb is small when $K$ is large.

Now suppose that $\mu_0 \sim \mathcal{N}(0, \sigma_q^2)$. However, the structure that $\mu_0$ is the same for all arms is not modeled. This problem is equivalent to a Bayesian bandit with a per-arm prior $\mathcal{N}(0, \sigma_q^2 + \sigma_0^2)$ and \reucb with known $\mu_0 = 0$ can solve it. When analyzed, the leading term in \cref{them-R-Bayes} would be
\begin{align*}
  2 \sqrt{\frac{a \log(1 + \sigma^{-2} (\sigma_q^2 + \sigma_0^2) n)}
  {\log(1 + \sigma^{-2} (\sigma_q^2 + \sigma_0^2))}
  (\sigma_q^2 +\sigma_0^2) K n \log n}\,.
\end{align*}
Thus, up to logarithmic factors, our regret bound is lower whenever $(1 + \sigma^2 / (K \sigma_0^2)) \sigma_0^2 \leq \sigma_q^2 + \sigma_0^2$, and it is beneficial to learn the common $\mu_0$ in this case. For any $\sigma_q > 0$, this is guaranteed as $K$ increases.

\cref{them-R-Bayes} can be extended in several ways. First, we generalize the model in \eqref{payoff-R} to arm-dependent reward noise. Specifically, the reward of arm $k$ after the $j$-th pull is drawn i.i.d.\ as $r_{k, j} \sim \mathcal{N}(\mu_k, \sigma_k^2)$, where the variance $\sigma_k^2$ may depend on $k$. In \cref{sec:varying}, we show that the Bayes regret bound in \cref{them-R-Bayes} still holds for $\sigma^2 = \max_{k \in [K]} \sigma_k^2$. Second, the Gaussian assumption in \cref{them-R-Bayes} is replaced with bounded sub-Gaussianity in \cref{sec:extensions}.

Finally, we would like to point out the limitations of our results. First, our proofs rely on well-behaved posterior distributions, either Gaussian or bounded sub-Gaussian. This is due to limitations of existing Bayes regret analyses, which use it to bound tail events conditioned on history \citep{Russo-Van:14}. We observe that it is not needed for good practical performance and believe that better analyses will be possible in the future. Second, our proofs are under the assumption that $\sigma_0^2$ and $\sigma^2$ are known. This is akin to existing Bayes regret analyses. We experiment with estimating $\sigma_0^2$ and $\sigma^2$ in \cref{sec:experiments}.

Now we are ready to prove \cref{them-R-Bayes}.

\subsection{Proof of \cref{them-R-Bayes}}
\label{sec:proof}

Let the confidence interval of arm $k$ in round $t$ be
\begin{equation}\label{confidence-C}
c_{k, t}
= \sqrt{a\tau_{k, t}^2 \log t}
= \sqrt{2 \tau_{k, t}^2 \log(1 / \delta_t)}\,, 
\end{equation} 
where $\delta_t = t^{- a / 2}$. 
Define the events that all confidence intervals in round $t$ hold as
\begin{align*}
E_{R;t}& =\left\{\forall k \in [K]: \hat{\mu}_{k,t} - \mu_{k} \leq c_{k,t}\right\}\,, \notag\\
E_{L;t}& =\left\{\forall k \in [K]: \mu_{k} - \hat{\mu}_{k,t} \leq c_{k,t}\right\}\,.
\end{align*} 
Fix round $t$. The regret in round $t$ is decomposed as
\begin{align}\label{regret-decom}
\E{\mu_{I_*}-\mu_{I_t}} 
& = \E{\condE{\mu_{I_*}-\mu_{I_t}}{H_t}}\notag\\
& = \E{\condE{\mu_{I_*}-\hat{\mu}_{I_*,t}-c_{I_*,t}}{H_t}}\notag\\
&\quad +\E{\condE{\hat{\mu}_{I_*,t}+c_{I_*,t}-\mu_{I_t}}{H_t}}\notag\\
& \leq \E{\condE{\mu_{I_*}-\hat{\mu}_{I_*,t}-c_{I_*,t}}{H_t}}\notag\\
&\quad +\E{\condE{\hat{\mu}_{I_t,t}+c_{I_t,t}-\mu_{I_t}}{H_t}}\,.
\end{align} 
The first equality is by the tower rule. The second is from the fact $\hat{\mu}_{k,t}$ and $c_{k,t}$ are deterministic given $H_t$. The inequality is from the fact that $I_t$ maximizes $\hat{\mu}_{k,t}+c_{k,t}$ over $k\in [K]$ given $H_t$. For each term in \eqref{regret-decom}, we get 
\begin{align*}
& \condE{\mu_{I_*}-\hat{\mu}_{I_*,t}-c_{I_*,t}}{H_t} \\
& \quad \leq \condE{(\mu_{I_*}-\hat{\mu}_{I_*,t})\mathds{1}\{\bar{E}_{L;t}\}}{H_t} ,\notag\\
& \condE{\hat{\mu}_{I_t,t}+c_{I_t,t}-\mu_{I_t}}{H_t} \\
& \quad \leq 2 \condE{c_{I_t,t}}{H_t}+\condE{(\hat{\mu}_{I_t,t}-\mu_{I_t})\mathds{1}\{\bar{E}_{R;t}\}}{H_t},
\end{align*}
where the inequalities are from the fact that $\mu_{I_*}-\hat{\mu}_{I_*,t}\leq c_{I_*,t}$ on $E_{L;t}$, and that $\hat{\mu}_{I_t,t}-\mu_{I_t}\leq c_{I_t,t}$ on $E_{R;t}$.
By chaining all inequalities, the regret is bounded as 
\begin{align}\label{regret-decom-upper}
&\E{\sum\nolimits_{t=1}^n(\mu_{I_*}-\mu_{I_t})} 
 \leq  2\E{\sum\nolimits_{t=1}^nc_{I_t,t}}\notag\\
&\quad +\E{\sum\nolimits_{t=1}^n\condE{(\hat{\mu}_{I_t,t}-\mu_{I_t})\mathds{1}\{\bar{E}_{R;t}\}}{H_t}}\notag\\
& \quad +\E{\sum\nolimits_{t=1}^n\condE{(\mu_{I_*}-\hat{\mu}_{I_*,t})\mathds{1}\{\bar{E}_{L;t}\}}{H_t}}.
\end{align}

We start with the first term in \eqref{regret-decom-upper}. This term depends on $\tau_{k,t}^2$, which depends on the pulls of all arms, as defined in \eqref{MSE}. Therefore, it is challenging to analyze. To do that, we use \cref{sec:tau} of \cref{sec:lemma}, which shows that
\begin{align*}
  \tau_{k, t}^2
  \leq \beta \tilde{\tau}_{k, t}^2
  = \beta \frac{1}{\sigma_0^{-2} + \sigma^{-2} n_{k, t}}
\end{align*}
for $\beta = 1 + \sigma^2 / (K \sigma_0^2)$. This means that we can bound $\tau_{k,t}^2$ by only considering arm $k$. Then
\begin{align*}
  \sum_{t = 1}^n c_{I_t, t}
  & \leq \sum_{t = 1}^n \sqrt{a \tau_{I_t, t}^2 \log n} \\
  & \leq \sqrt{a \beta n \log n}
  \sqrt{\sum_{t = 1}^n \frac{1}{\sigma_0^{-2} + \sigma^{-2} n_{I_t, t}}}\,,
\end{align*}
where the first inequality is from the definition of $c_{k, t}$ and $\log t \leq \log n$, and we used the Cauchy-Schwarz inequality in the second one.

Note that $x / \log(1 + x) \leq m / \log(1 + m)$ for $x \in [0, m]$, because $x = \log(1 + x)$ at $x = 0$ and $x$ grows faster than $\log(1 + x)$ on $[0, m]$. Now we apply this bound for $x = \sigma^{-2} / (\sigma_0^{-2} + \sigma^{-2} n_{I_t, t})$ and $m = \sigma^{-2} \sigma_0^2$, and get
\begin{align*}
  \frac{1}{\sigma_0^{-2} + \sigma^{-2} n_{I_t, t}}
  & \leq \gamma \log\left(1 +
  \frac{\sigma^{-2}}{\sigma_0^{-2} + \sigma^{-2} n_{I_t, t}}\right) \\
  & = \gamma \log\frac{\sigma_0^{-2} + \sigma^{-2} (n_{I_t, t} + 1)}
  {\sigma_0^{-2} + \sigma^{-2} n_{I_t, t}}\,,
\end{align*}
where $\gamma = \sigma_0^2 / \log(1 + \sigma^{-2} \sigma_0^2)$. The above leads to telescoping and
\begin{align*}
  \sum_{t = 1}^n \frac{1}{\sigma_0^{-2} + \sigma^{-2} n_{I_t, t}}
  & \leq \gamma K \left[\log(\sigma_0^{-2} + \sigma^{-2} n) -
  \log(\sigma_0^{-2})\right]\notag\\
  & = \gamma K \log(1 + \sigma^{-2} \sigma_0^2 n)\,,
\end{align*}
where we used that any arm is pulled at most $n$ times. Now we put everything together and get
\begin{align}\label{regret-decom-upper1}
  \sum_{t = 1}^n c_{I_t, t}
  \leq \sqrt{\frac{a \log(1 + \sigma_0^2 n)}{\log(1 + \sigma^{-2}\sigma_0^2)}
  \beta \sigma^{-2}\sigma_0^2 K n \log n}\,.
\end{align}
The next step is the second term in \eqref{regret-decom-upper}. To bound it, we show in \cref{sec:mu0Dist} of \cref{sec:lemma} 
that $\mu_{k} | H_t \sim \mathcal{N}(\hat{\mu}_{k, t}, \tau_{k, t}^2)$, under the assumption of $r_{k, j}\sim \mathcal{N}(\mu_k, \sigma^2)$ and $\mu_k\sim\mathcal{N}(\mu_0, \sigma_0^2)$. By using this property,
\begin{align*}
&\quad \condE{(\hat{\mu}_{I_t,t}-\mu_{I_t})\mathds{1}\{\bar{E}_{R;t}\}}{H_t} \notag\\
\leq & \sum_{k=1}^K\frac{1}{\sqrt{2\pi\tau_{k,t}^{2}}}
\int\limits_{x\geq c_{k,t}}x\exp\left(-\frac{x^2}{2\tau_{k,t}^{2}}\right)dx\leq\frac{\delta_t}{\sqrt{2\pi}}\sum_{k=1}^K\tau_{k,t}\,,
\end{align*}
where the last inequality is from \eqref{confidence-C}. It follows that for $a \geq 1$,
\begin{align}\label{regret-decom-upper-term2}
& \E{\sum\limits_{t=1}^n\condE{(\hat{\mu}_{I_t,t}-\mu_{I_t})\mathds{1}\{\bar{E}_{R;t}\}}{H_t}}\notag\\ 
& \quad \leq \frac{1}{\sqrt{2\pi}}\sum\limits_{t=1}^nt^{-1/2} \beta K \sqrt{\frac{\sigma^2}{1+\sigma^2\sigma_0^{-2}}}\notag\\
& \quad \leq \frac{1}{\sqrt{\pi}} \beta K \sqrt{\frac{2n\sigma^2}{1+\sigma^2\sigma_0^{-2}}}\,,
\end{align} 
where the first inequality follows from $\delta_t\leq t^{-1/2}$ for $a \geq 1$ (\cref{sec:tau} of \cref{sec:lemma}), and the last one is from $\sum_{t=1}^nt^{-1/2}\leq 2\sqrt{n}$. Similarly, when $a \geq 2$,
\begin{align}\label{regret-decom-upper-term22}
& \E{\sum\limits_{t=1}^n\condE{(\hat{\mu}_{I_t,t}-\mu_{I_t})\mathds{1}\{\bar{E}_{R;t}\}}{H_t}}\notag\\
 & \quad \leq \frac{\beta K (1 + \log n)}{\sqrt{2 \pi}} \sqrt{\frac{\sigma^2}{1+\sigma^2\sigma_0^{-2}}}\,,
\end{align} 
where the inequality is from $\delta_t \leq t^{- 1}$ for $a \geq 2$ and $\sum_{t=1}^nt^{-1}\leq 1+\log n$. 

At last, we study the third term in \eqref{regret-decom-upper}. The result in \eqref{regret-decom-upper-term2} or \eqref{regret-decom-upper-term22} holds for the term. 
Therefore, by combing \eqref{regret-decom-upper}, \eqref{regret-decom-upper1},  \eqref{regret-decom-upper-term2}, and \eqref{regret-decom-upper-term22},
the theorem is proved.



\section{SYNTHETIC EXPERIMENTS}
\label{sec:experiments}

\begin{figure*}[!ht]
\centering
\begin{subfigure}[b]{0.67\columnwidth}
\includegraphics[keepaspectratio,width=1\linewidth]{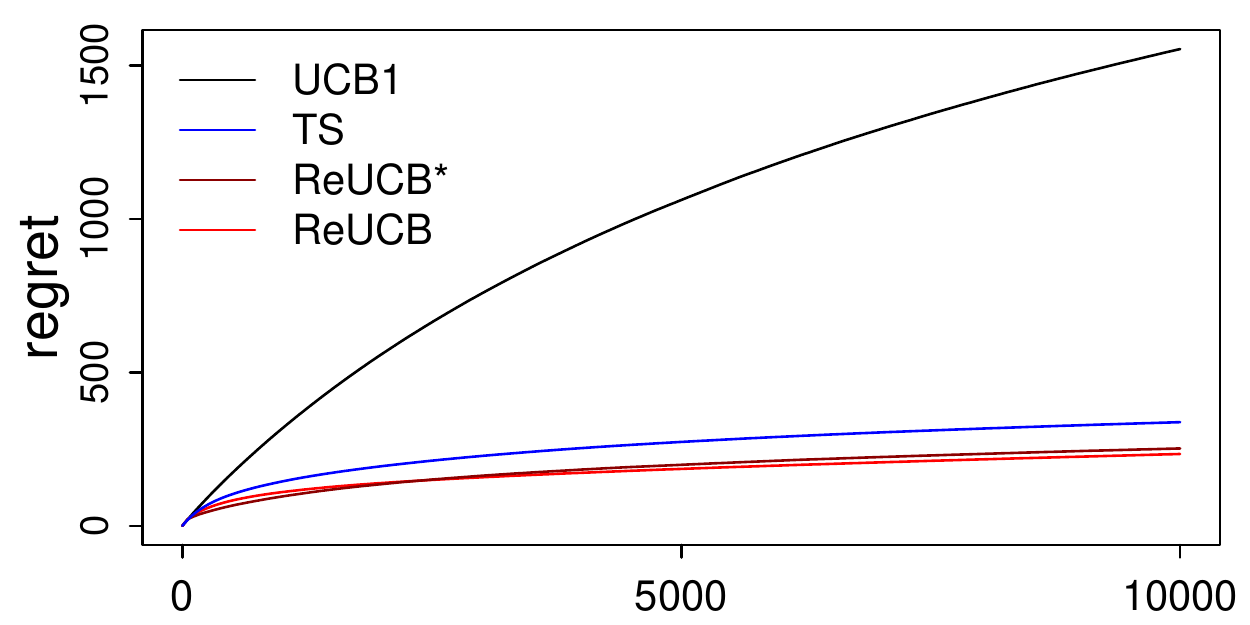}
\includegraphics[keepaspectratio,width=1\linewidth]{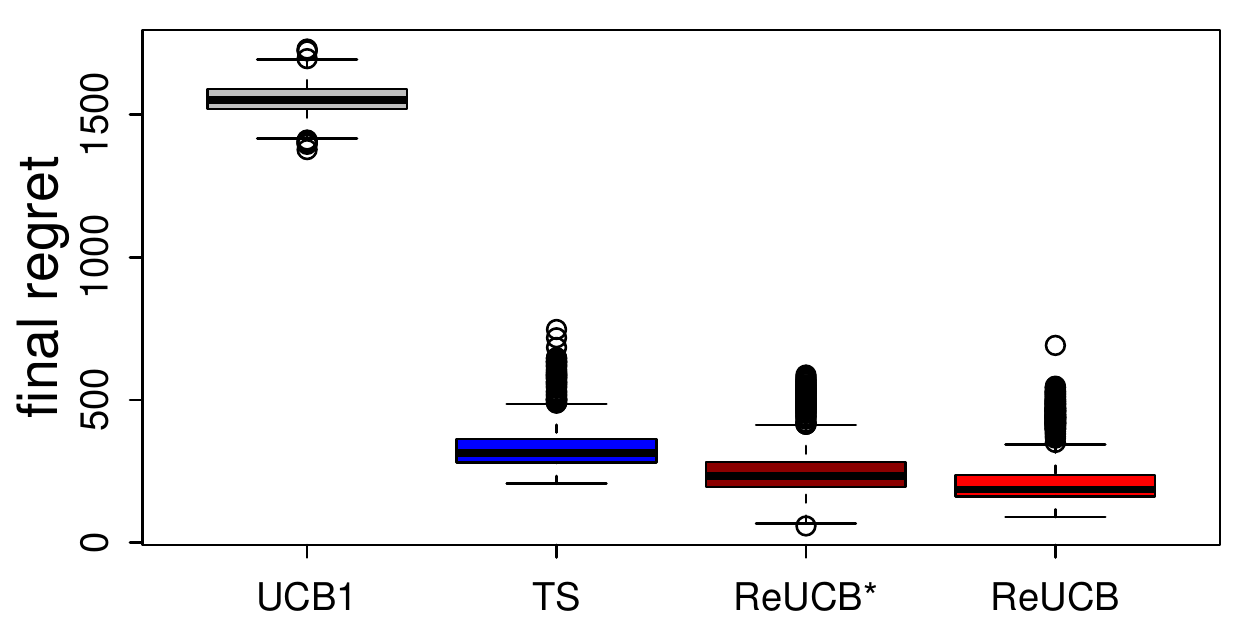}
\vspace{-0cm}
\subcaption{$\mu_k\sim \mathcal{N}(1,0.04)$ }
\end{subfigure}
\begin{subfigure}[b]{0.67\columnwidth}
\includegraphics[keepaspectratio,width=1\linewidth]{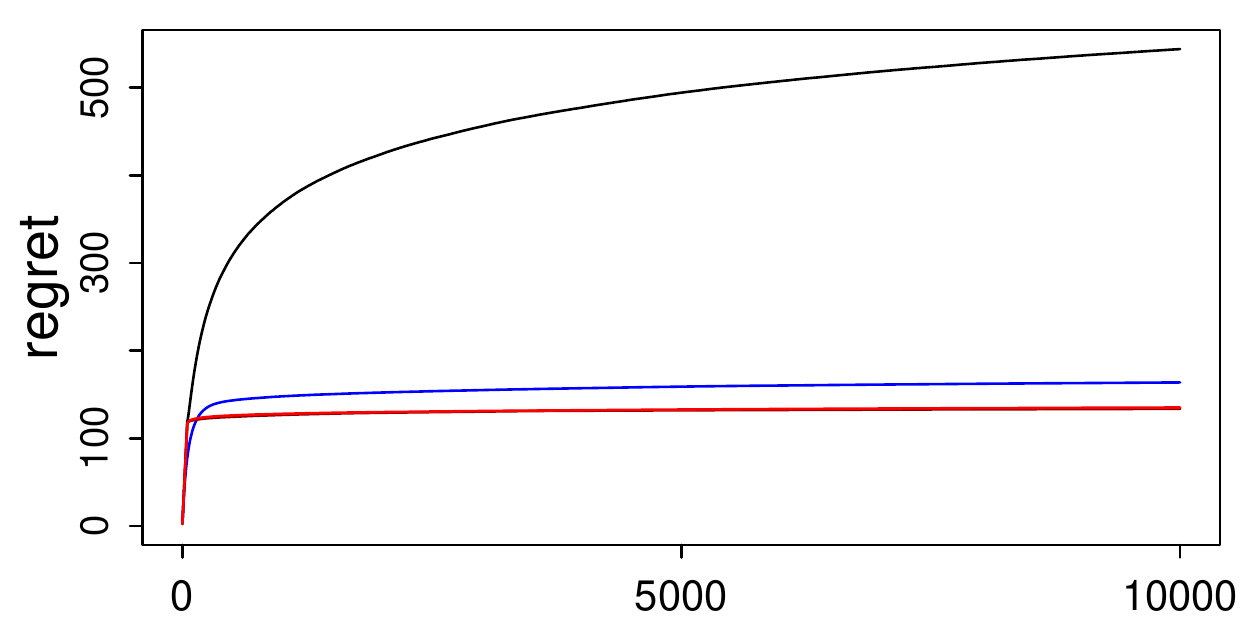}
\includegraphics[keepaspectratio,width=1\linewidth]{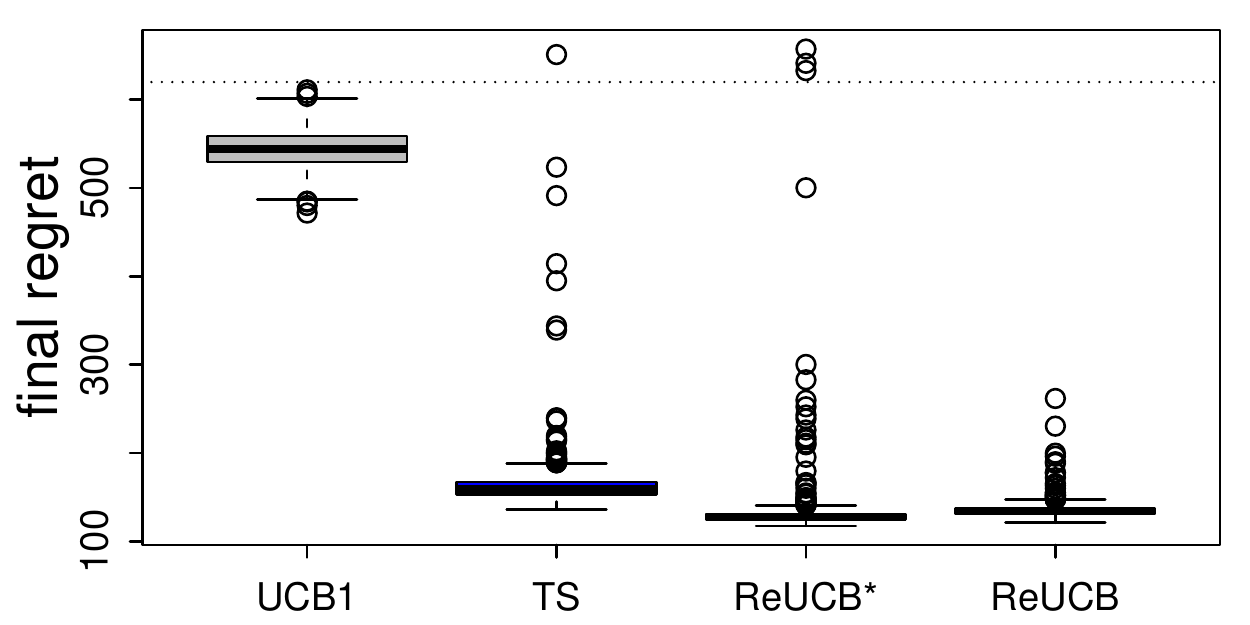}
\vspace{-0cm}
\subcaption{$\mu_k\sim \mathcal{N}(1,1)$}
\end{subfigure}
\begin{subfigure}[b]{0.67\columnwidth}
\includegraphics[keepaspectratio,width=1\linewidth]{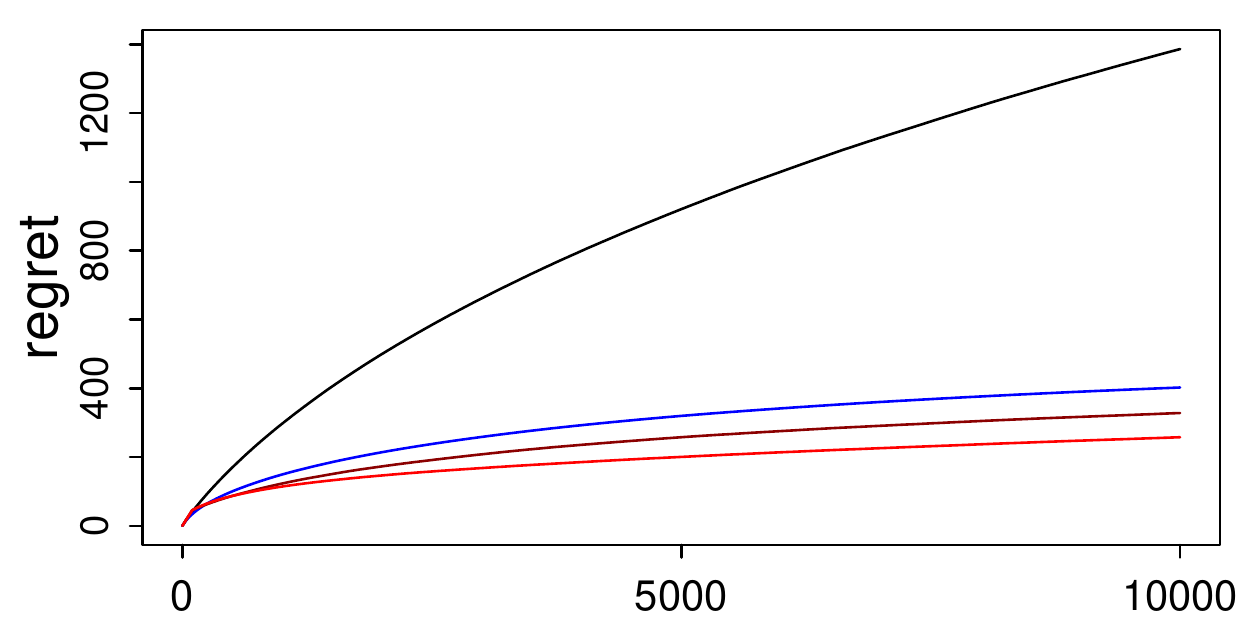}
\includegraphics[keepaspectratio,width=1\linewidth]{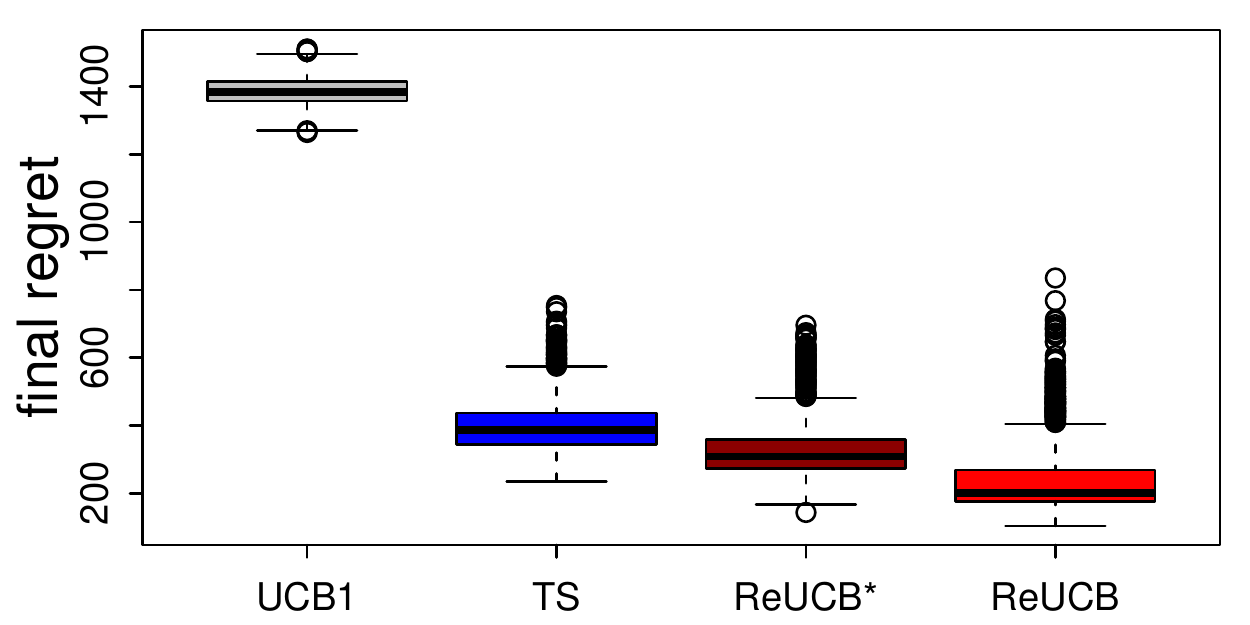}
\vspace{-0cm}
\subcaption{$\mu_k\sim \mathcal{U}[1,2]$}
\end{subfigure}
\caption{$K$-armed Gaussian bandits with $\mu_k$ from $\mathcal{N}(\mu_0, \sigma_0^2)$ and $\mathcal{U}[1, 2]$. Upper row: Regret as a function of round $n$. Lower row: Distribution of the regret at the final round.} 
\label{fig:Gaussian-armed}
\vspace{-0in}
\end{figure*}

We study two bandit settings: Gaussian (\cref{sec:gaussian bandits}) and Bernoulli (\cref{sec:bernoulli bandits}). Moreover, in \cref{sec:mis bandits}, we study misspecified priors. \reucb is compared to \ucb \citep{Auer:02} and \ts \citep{T:33}. \ts is chosen because it uses the same structure as \reucb, that arm means are random. However, it needs more knowledge, the prior distribution of $\mu_0$. Since \reucb is a UCB algorithm, it is natural to compare it to other UCB algorithms. We focus on \ucb due to its simplicity and popularity, but also compare to \bayesucb \citep{KCG:12} and \klucb \citep{Garivier:11} in \cref{fig:Gaussian-Others} of \cref{sec:add-exp}. Both \bayesucb and \klucb improve over \ucb, but are not better than \ts. This is consistent with other reported results in the literature \citep{Kveton:19}. There are many other potential baselines, such as Giro \citep{Kveton:18} and PHE \citep{Kveton:19}. Our \reucb is fundamentally different from these methods, since our arm means are random. Also, when compared to these methods, \ts is typically a strong baseline \citep{Kveton:19}. Therefore, to make our empirical studies clean and focused, we compare to \ucb and \ts.

We evaluate two variants of \reucb: (1) \reucbs, where $\mu_0$ is estimated, and $\sigma_0^2$ and $\sigma^2$ are known; and (2) \reucb, where all of $\mu_0$, $\sigma_0^2$, and $\sigma^2$ are estimated. The variance estimators $\hat{\sigma}_{0,t}^2$ and $\hat{\sigma}_{t}^2$ are provided in \eqref{variance1} and \eqref{variance2}, respectively, of  \cref{sec:estimation-3}. Unless specified, the default priors in Gaussian and Bernoulli \ts are $\mathcal{N}(\mu_0, \sigma_0^2)$ and $\mathrm{Beta}(1,1)$, respectively. The upper confidence bound in \ucb is $\bar{r}_{k, t} + \sqrt{8 n_{k, t}^{-1} \sigma^2 \log t}$. This is a generalization of the original algorithm to $\sigma^2$-sub-Gaussian rewards. In the original algorithm, $\sigma^2 = 1 / 4$. In Gaussian bandits, we set $\sigma$ to Gaussian noise. In Bernoulli bandits, this reduces to the \ucb index since $\sigma^2 = 1 / 4$. All simulations are averaged over $1000$ independent runs.

\subsection{Gaussian Bandits}
\label{sec:gaussian bandits}

Our first experiment is on $K$-armed Gaussian bandits. The reward distribution of arm $k$ is $\mathcal{N}(\mu_k, \sigma^2)$ where $\sigma = 0.5$. We generate $\mu_k$ in two ways. First, $\mu_k$ are drawn independently from Gaussian prior $\mathcal{N}(\mu_0, \sigma_0^2)$, where we study two settings of $(\mu_0, \sigma_0^2)$: $(1, 0.04)$ (low coefficient of variation $0.2$) and $(1, 1)$ (high coefficient of variation $1$). Second, $\mu_k$ are drawn from uniform distribution $\mathcal{U}[1, 2]$. The number of arms is $K = 50$. The horizon is $n = 10^4$ rounds.

Figures \ref{fig:Gaussian-armed}(a) and \ref{fig:Gaussian-armed}(b) report results for Gaussian priors, while \cref{fig:Gaussian-armed}(c) shows results for the uniform prior. We observe that \ts works well and outperforms \ucb. \reucb has a much lower regret than both \ucb and \ts. Besides good average performance, the distribution of the regret in the final round (lower row in \cref{fig:Gaussian-armed}) shows good stability. The good performance of \reucb in \cref{fig:Gaussian-armed}(c) indicates that \reucb works for various priors. \reucb performs well empirically because our high-probability confidence intervals are narrower than in the classical setting (\cref{compare-tauANDmab}). It outperforms \ts with more information in Gaussian bandits because its confidence interval widths $\tau_{k, t}$ are narrower than the posterior widths of \ts.

Now we compare the regret of \reucbs and \reucb in \cref{fig:Gaussian-armed}. Clearly the estimation of $\sigma^2$ and $\sigma_0^2$ does not have a major impact on the regret of \reucb. In fact, the regret slightly decreases. We believe that this is due to the additional randomness in our method-of-moments estimators of $\sigma^2$ and $\sigma_0^2$. These results suggest that one limitation of our analysis, that $\sigma^2$ and $\sigma_0^2$ are known, is not a limitation in practice.

Finally, we report the run times of all algorithms. All experiments are conducted in R, on a PC with $3$GHz Intel i7 CPU, $8$GB RAM, and OS X operating system. In \cref{fig:Gaussian-armed}(a), a single run of \reucb, \ucb, and \ts takes on average $1.27$s, $1.16$s, and $3.19$s, respectively. So \reucb is slightly slower than \ucb but much faster than \ts, which is slower due to posterior sampling.

\subsection{Bernoulli Bandits}
\label{sec:bernoulli bandits}

\begin{figure*}[!ht]
\centering
\begin{subfigure}[b]{0.67\columnwidth}
\includegraphics[keepaspectratio,width=1\linewidth]{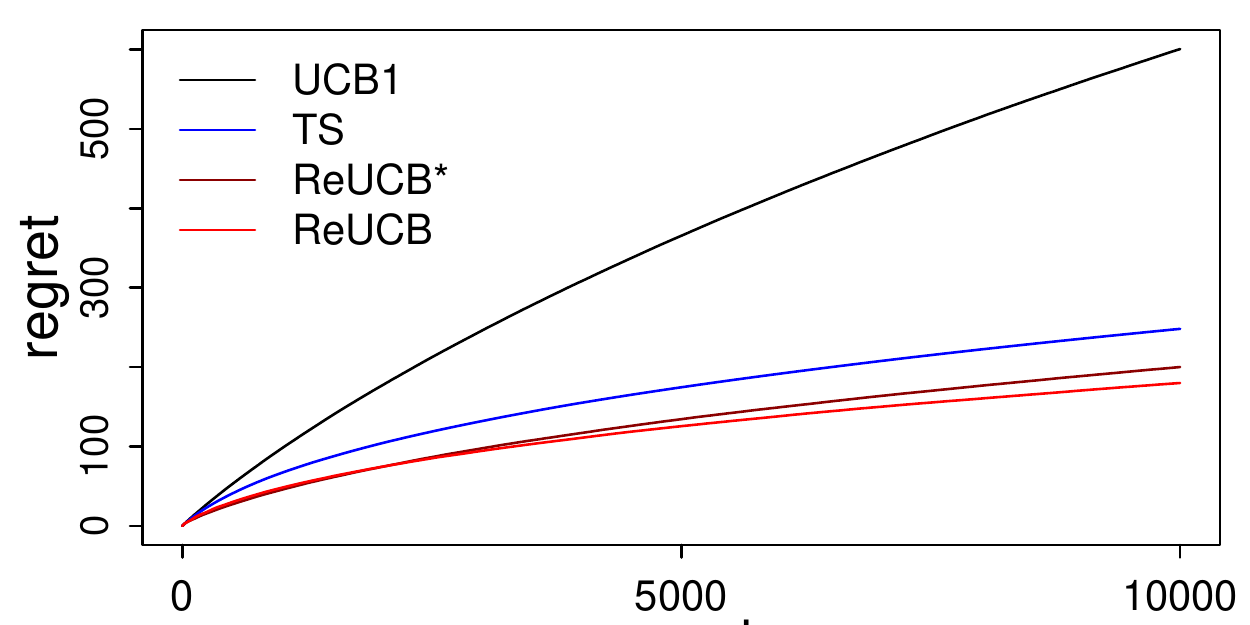}
\includegraphics[keepaspectratio,width=1\linewidth]{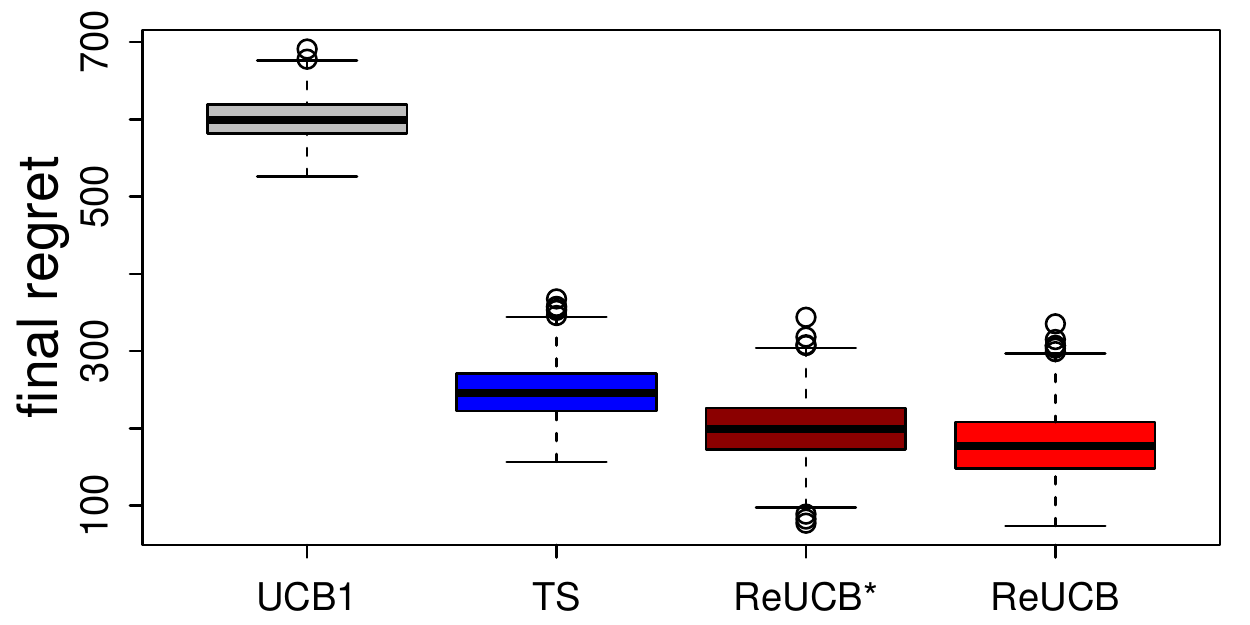}
\vspace{-0cm}
\subcaption{$K=20$}
\end{subfigure}
\begin{subfigure}[b]{0.67\columnwidth}
\includegraphics[keepaspectratio,width=1\linewidth]{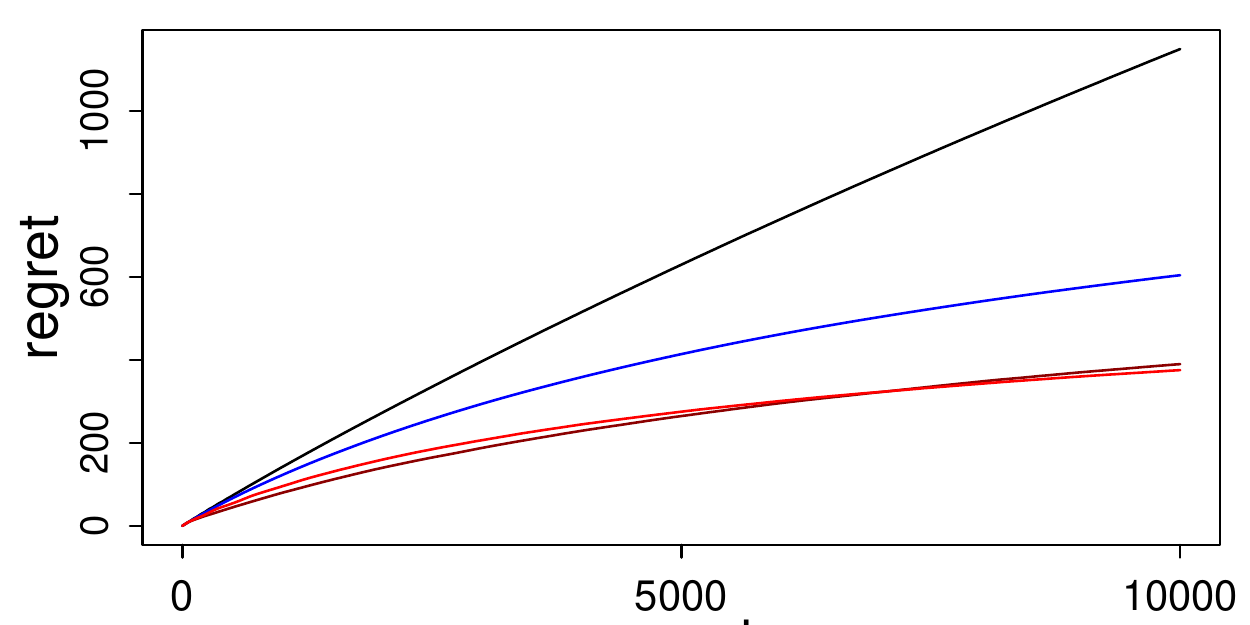}
\includegraphics[keepaspectratio,width=1\linewidth]{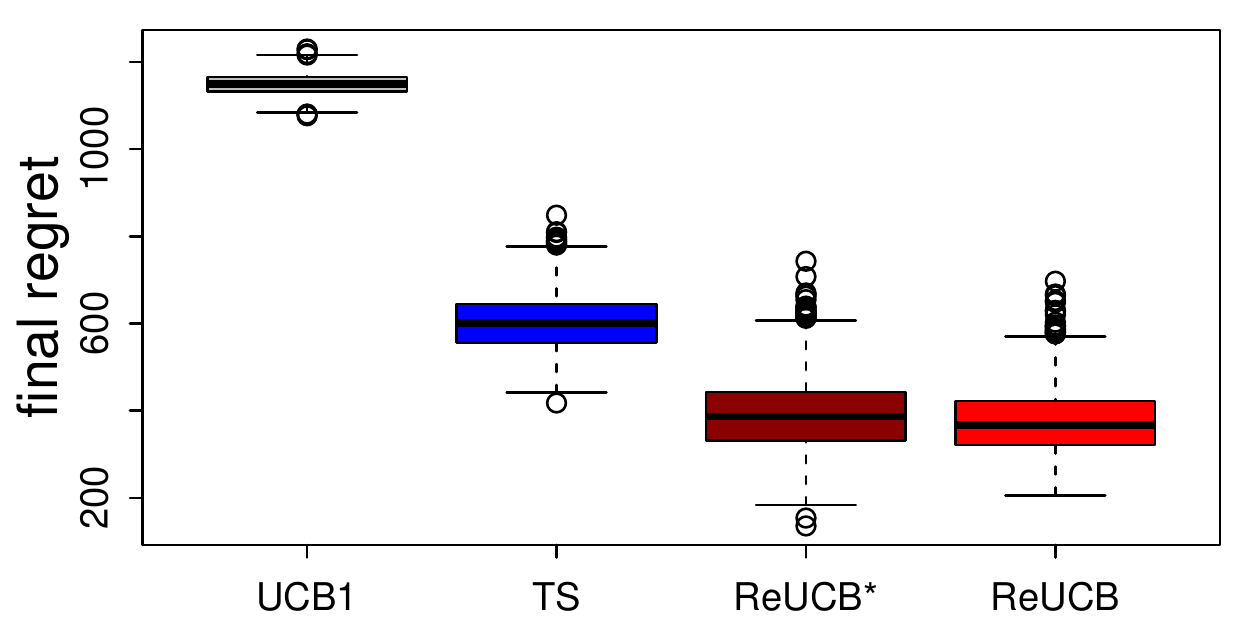}
\vspace{-0cm}
\subcaption{$K=50$}
\end{subfigure}
\begin{subfigure}[b]{0.67\columnwidth}
\includegraphics[keepaspectratio,width=1\linewidth]{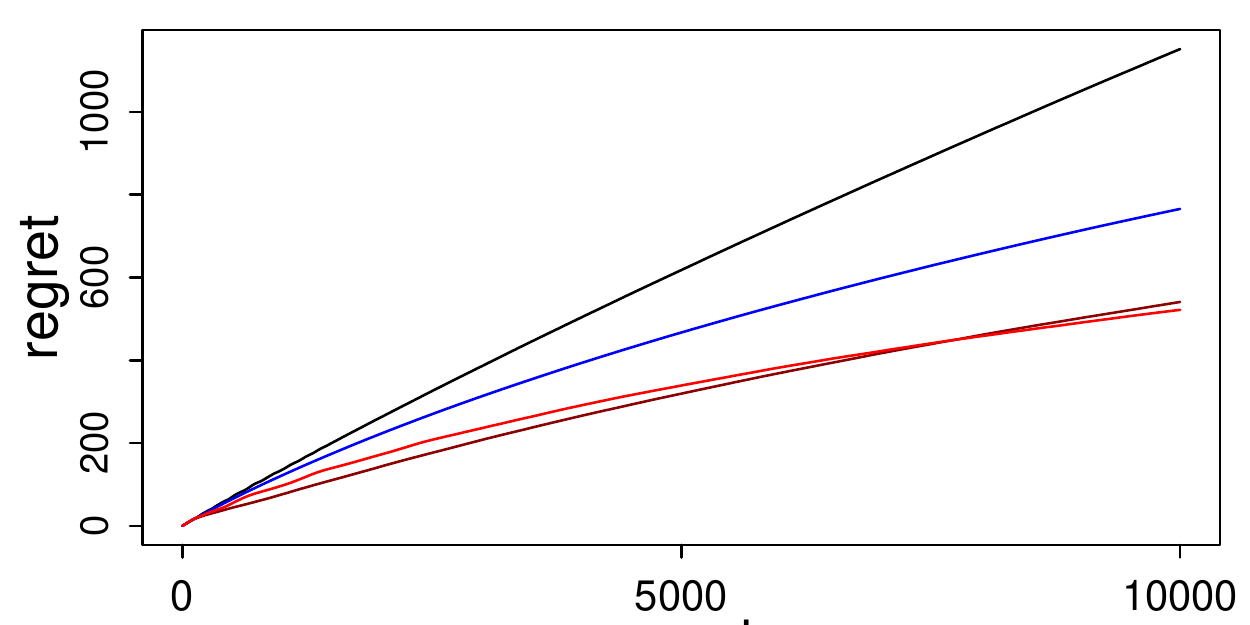}
\includegraphics[keepaspectratio,width=1\linewidth]{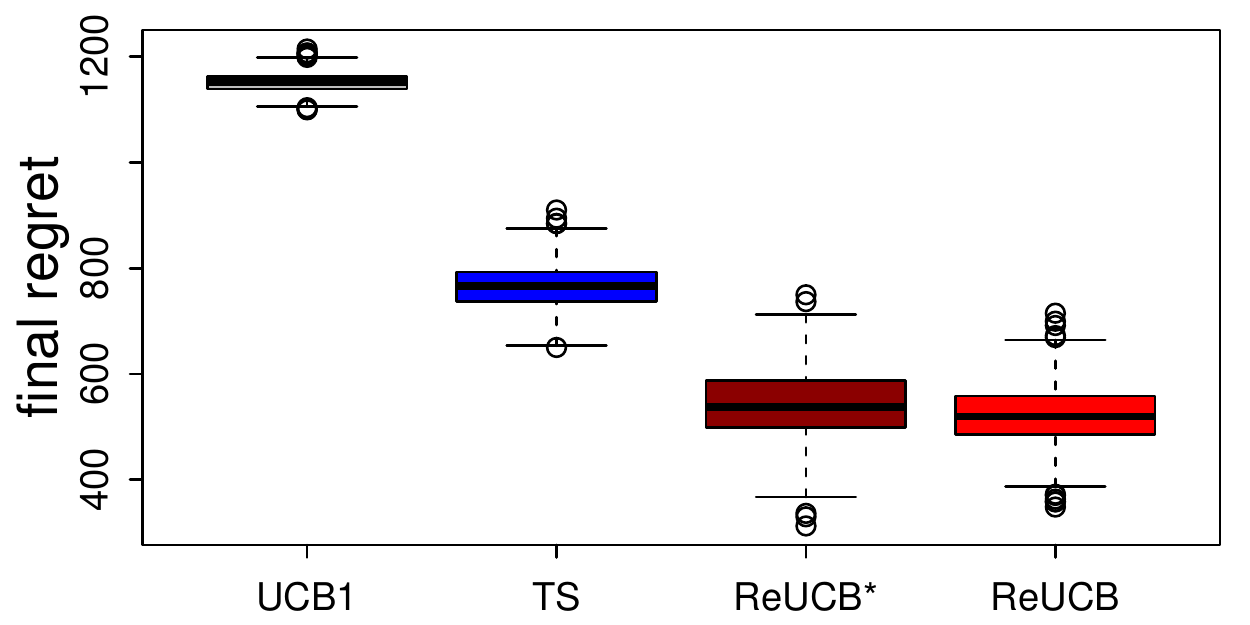}
\vspace{-0cm}
\subcaption{$K=100$}
\end{subfigure}
\caption{$K$-armed Bernoulli bandits with $\mu_k\sim \mathcal{U}[0.2, 0.5]$. Upper row: Regret as a function of round $n$. Lower row: Distribution of the regret at the final round.}
\vspace{-0in}
\label{fig:bernoulli}
\end{figure*}

The second experiment is conducted on $K$-armed Bernoulli bandits, where the reward distribution of arm $k$ is $\text{Bern}(\mu_k)$. The arm means $\mu_k$ are drawn i.i.d.\ from uniform distribution $\mathcal{U}[0.2, 0.5]$. We experiment with three settings for the number of arms $K \in \{20, 50, 100\}$, to show that \reucb performs well across all of them. The horizon is $n = 10^4$ rounds.

As in \cref{fig:Gaussian-armed}, we observe in \cref{fig:bernoulli} that \reucb has a much lower regret than \ucb and \ts, and performs similarly to \reucbs. To implement \reucbs, we set the maximum variance to $\sigma^2 = 1 / 4$, as suggested in \cref{sec:extensions}. Different from \cref{fig:Gaussian-armed}, \cref{fig:bernoulli} shows the regret for various $K$. As the number of arms $K$ increases, the gap between our approaches and the baselines increases.

\subsection{Model Misspecification}
\label{sec:mis bandits}

\begin{figure*}[!ht]
\centering
\begin{subfigure}[b]{0.67\columnwidth}
\includegraphics[keepaspectratio,width=1\linewidth]{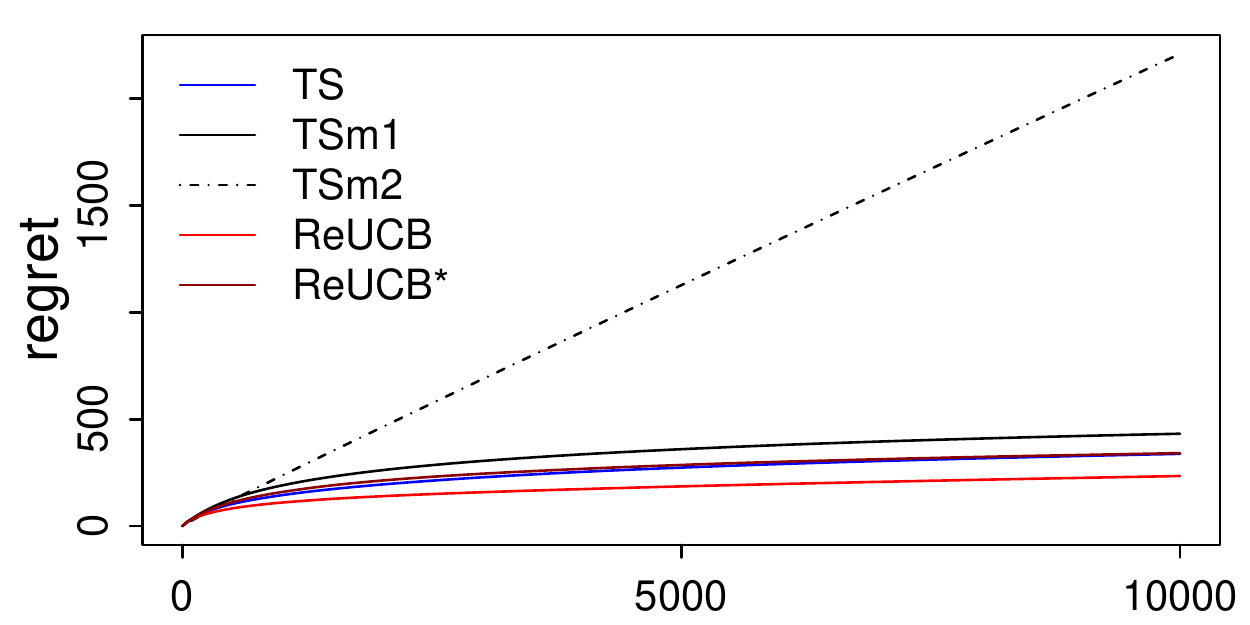}
\includegraphics[keepaspectratio,width=1\linewidth]{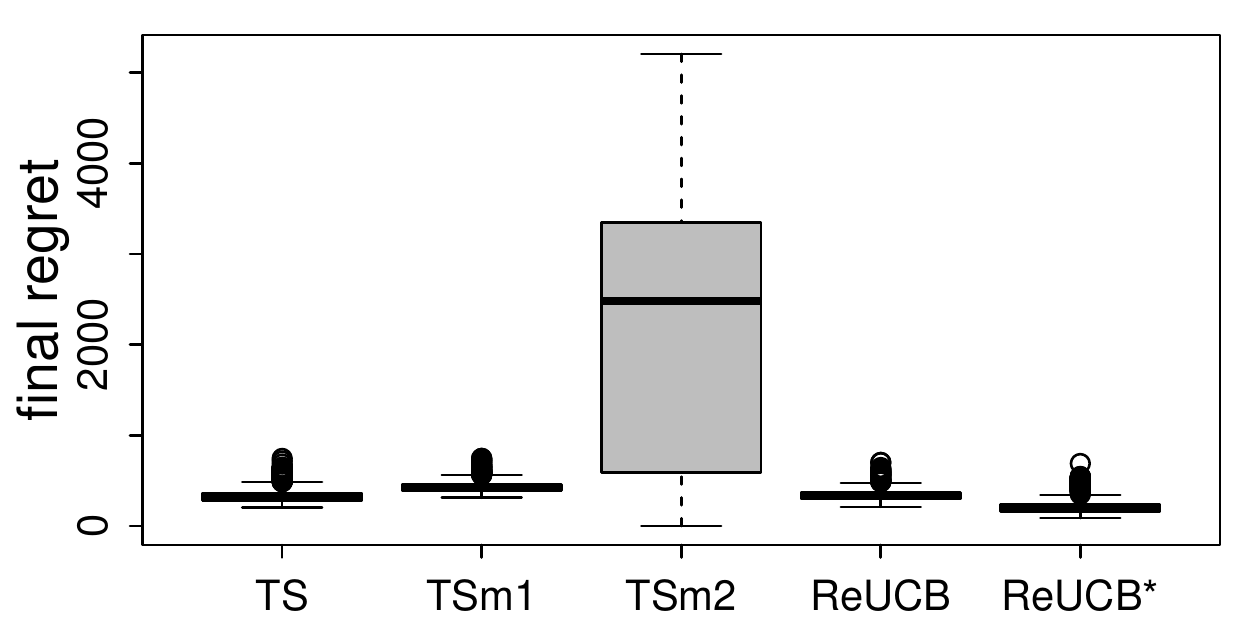}
\vspace{-0cm}
\subcaption{Gaussian $\mu_k\sim \mathcal{N}(1, 0.04)$.}
\end{subfigure}
\begin{subfigure}[b]{0.67\columnwidth}
\includegraphics[keepaspectratio,width=1\linewidth]{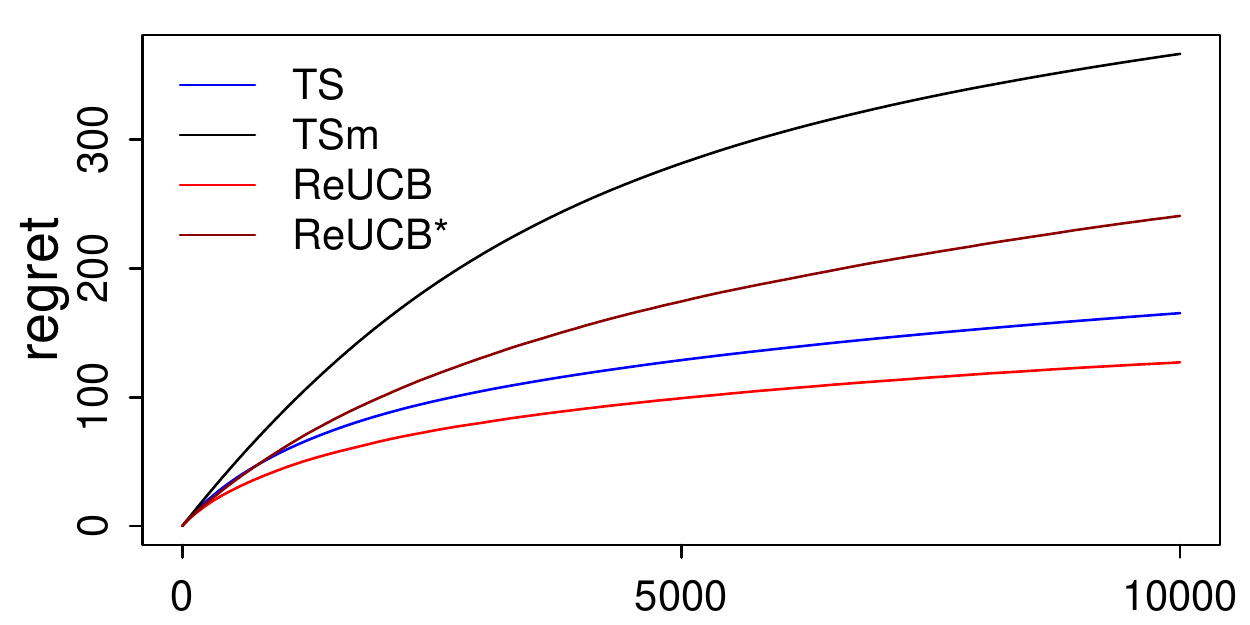}
\includegraphics[keepaspectratio,width=1\linewidth]{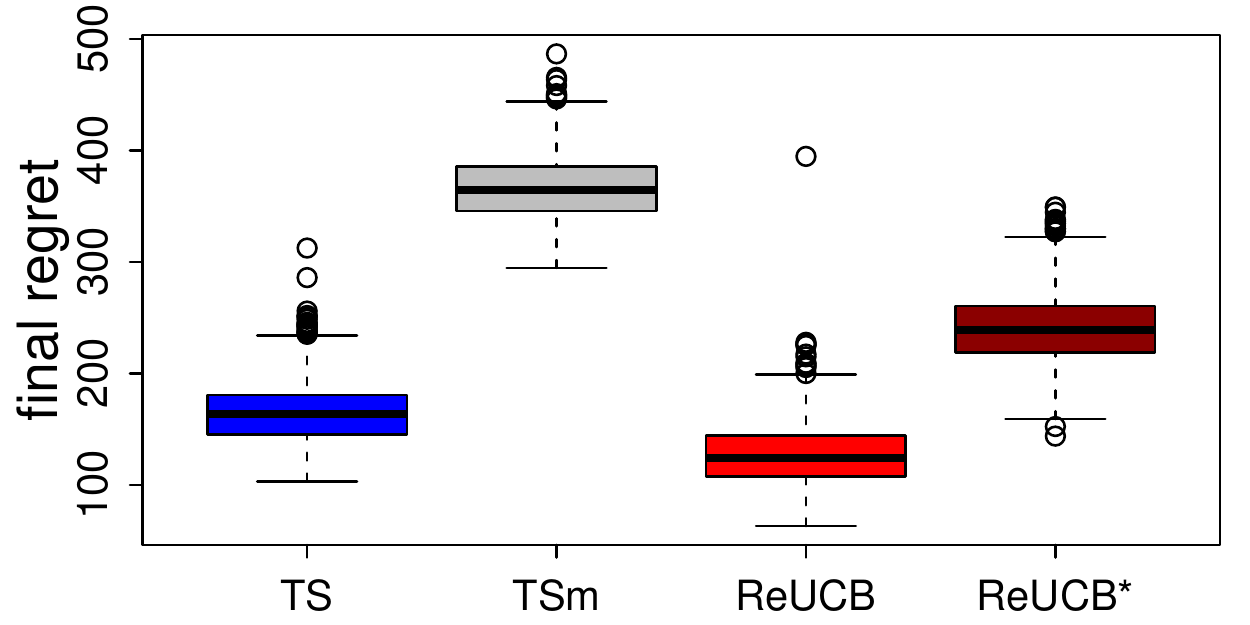}
\vspace{-0cm}
\subcaption{Bernoulli $\mu_k\sim \text{Beta}(1, 9)$.}
\end{subfigure}
\begin{subfigure}[b]{0.67\columnwidth}
\includegraphics[keepaspectratio,width=1\linewidth]{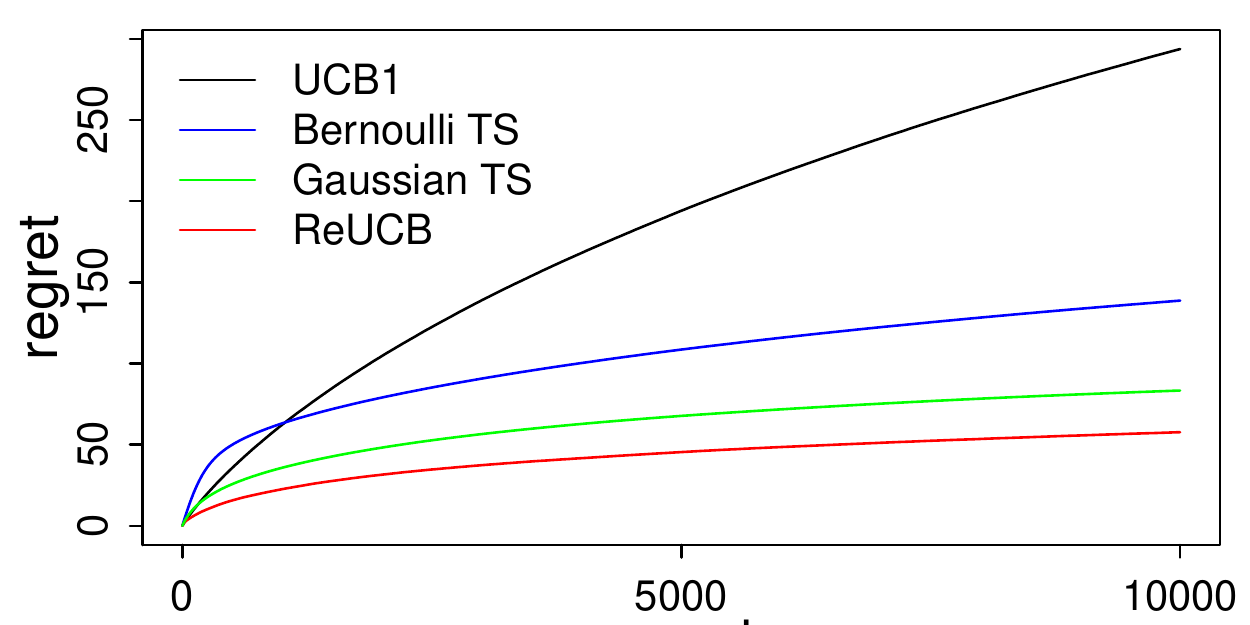}
\includegraphics[keepaspectratio,width=1\linewidth]{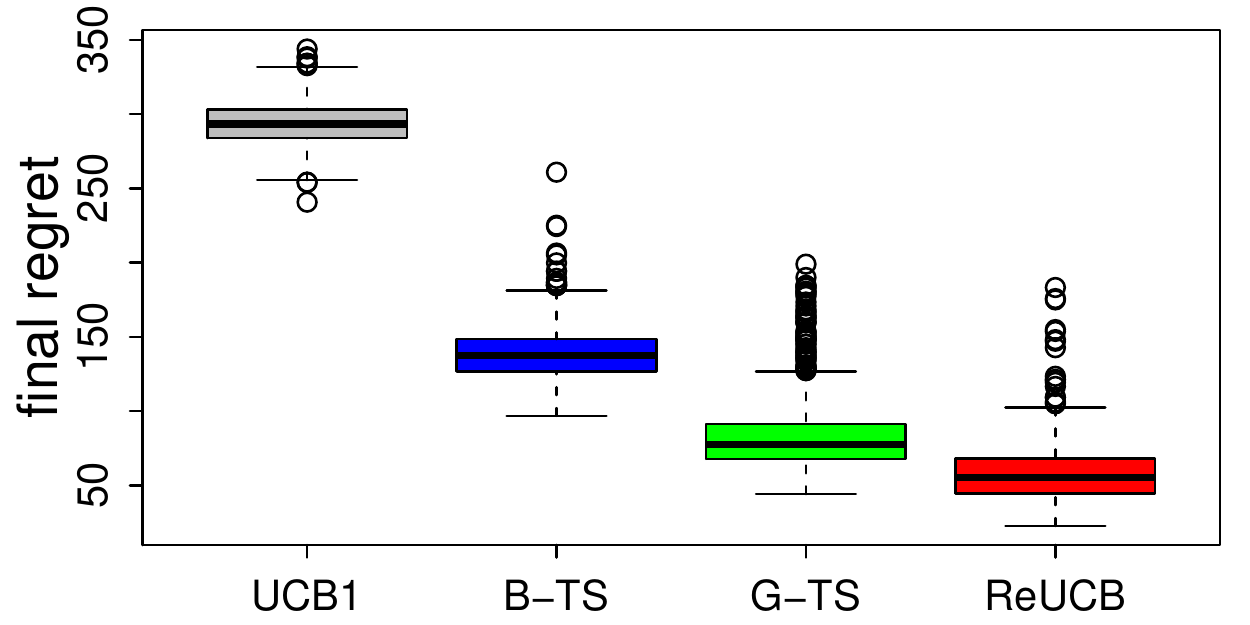}
\vspace{-0cm}
\subcaption{Truncated Gaussian}
\end{subfigure}
\vspace{-0cm}
\caption{Model misspecification experiments. Upper row: Regret as a function of round $n$. Lower row: Regret at the final round.}
\vspace{-0in}
\label{fig:Mis-prior}
\end{figure*}

Now we study what happens when \ts and \reucbs are applied to misspecified models. Note that \reucb is also  misspecified in Section \ref{sec:bernoulli bandits}, where the reward noise in Bernoulli bandits depends on the mean of the arm, meaning that it is not identical across the arms.

In the first experiment, we have a $50$-armed Gaussian bandit with $\mu_k \sim \mathcal{N}(1, 0.04)$, as in \cref{fig:Gaussian-armed}(a). We implement two variants of Thompson sampling with misspecified priors: \tsma with prior $\mathcal{N}(1, 1)$ (misspecified $\sigma_0^2$) and \tsmb with prior $\mathcal{N}(0, 0.04)$ (misspecified $\mu_0$). In \reucbs, $\sigma_0^2 = 1$ and thus is also misspecified. Our results are reported in \cref{fig:Mis-prior}(a), where \tsmb fails and has linear regret. The reason is that the misspecified prior has low variance and is downwards biased. Therefore, any initially pulled suboptimal arm is likely to be pulled again. \tsma also performs much worse than \ts with the correct prior. In contrast, \reucb estimates the unknown mean $\mu_0$ and outperforms \ts that knows $\mu_0$. Even \reucbs with misspecified $\sigma_0^2$ is comparable to \ts with the correct prior.

In the second experiment, we have a Bernoulli bandit with $K = 20$ arms and $\mu_k \sim \text{Beta}(1, 9)$. We study two variants of Thompson sampling: \ts with a correct prior and \tsm with misspecified prior $\text{Beta}(9, 1)$. In \reucbs, we set $\sigma_0^2 = 0.00818$ to match the variance of $\text{Beta}(9, 1)$ and $\sigma^2 = 0.25$ because this is the maximum reward variance. This makes \reucbs misspecified. Our results are reported in \cref{fig:Mis-prior}(b) and are similar to \cref{fig:Mis-prior}(a). We observe that \tsm performs worse than \reucbs, and that \ts has much higher regret than \reucb.

In the last experiment, we study reward-model misspecification. We have a $20$-armed Gaussian bandit where the rewards are truncated to $[0, 1]$ as $r_{k, j} = \min \{\max \{r'_{k, j}, 0\}, 1\}$ for $r'_{k, j} \sim \mathcal{N}(\mu_k, 0.04)$. The mean arm rewards are generated as $\mu_k \sim \mathcal{N}(0.3, 0.01)$. We implement Bernoulli \ts with prior $\text{Beta}(6, 14)$ to match the moments of $\mathcal{N}(0.3, 0.01)$ and Gaussian \ts with the correct prior $\mathcal{N}(0.3, 0.01)$. Our results are reported in \cref{fig:Mis-prior}(c). \reucb performs robustly and outperforms Thompson sampling.

\section{EXPERIMENTS ON REAL DATA}
\label{sec:real}

\begin{figure*}[!ht]
\begin{minipage}[c]{0.775\textwidth}
\begin{subfigure}[b]{0.495\columnwidth}
    \includegraphics[width=1\textwidth]{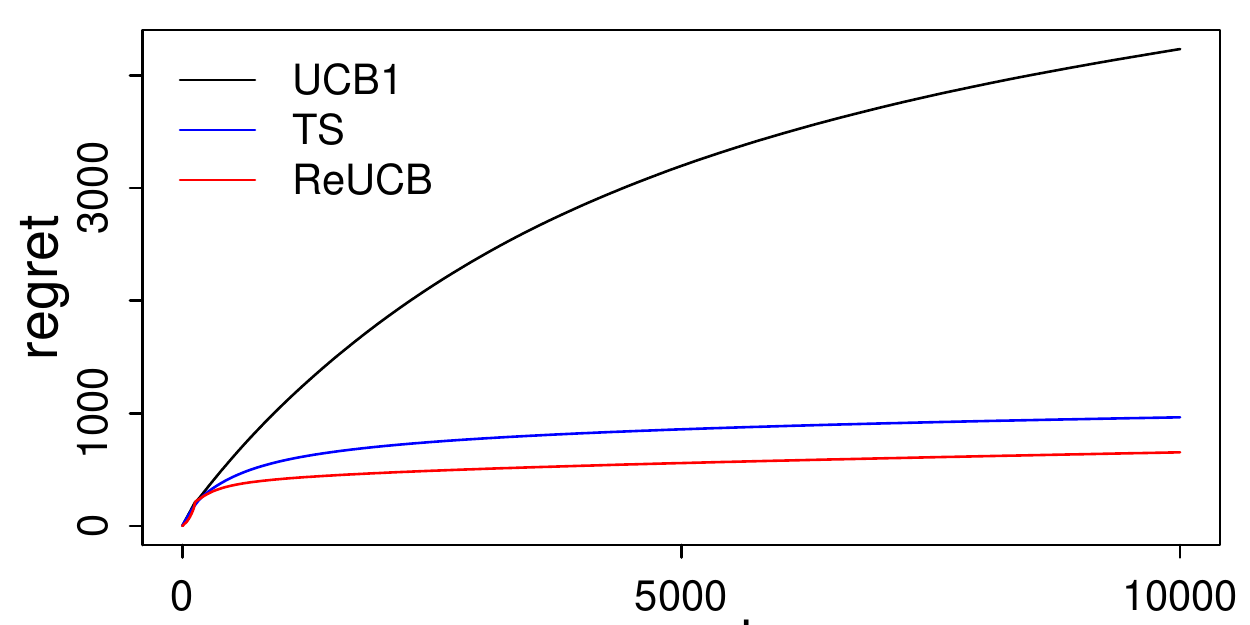}
\end{subfigure}
\begin{subfigure}[b]{0.495\columnwidth}
    \includegraphics[width=1\textwidth]{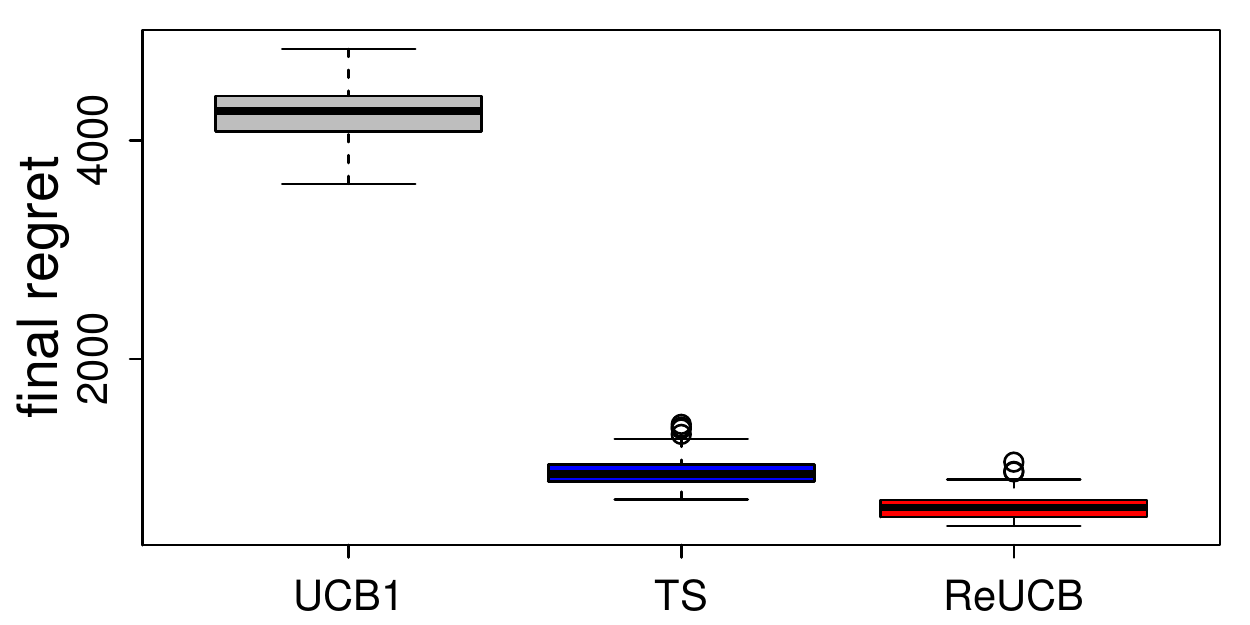}
\end{subfigure}
\end{minipage}\hfill
\begin{minipage}[c]{0.22\textwidth}
\caption{MovieLens experiment. Left: Regret as a function of round $n$. Right: Distribution of the regret at the final round.}
\label{fig:movie}
\end{minipage}
\vspace{-0cm}
\end{figure*}

In the last experiment, we evaluate \reucb on a recommendation problem. The goal is to identify the movie that has the highest expected rating. We experiment with the MovieLens dataset \citep{movie:16}, where we used a subset of $K = 128$ user groups and $L = 128$ movies randomly chosen from the full dataset, as described in \citet{K:17}. For each user group and movie, we average the ratings of all users in the group that rated the movie, and obtain the expected rating matrix $\mathbf{M}$ of rank $5$, which is learned by a low-rank approximation on the underlying rating matrix of the user groups and movies. See details of the pre-processing in \citet{K:17}. 

Our results are averaged over $200$ runs. In each run, user $j$ is chosen uniformly at random from $[128]$ and it represents a bandit instance in that run. The goal is to learn the most rewarding movie for user $j$. We treat this problem as a random-effect bandit with $K = 128$ arms, one per movie, where the mean reward of movie $k$ by user $j$ is $M_{j, k}$. The rewards are generated from $\mathcal{N}(M_{j, k}, 0.796^2)$, where the variance $0.796^2$ is estimated from data.

Our approach is compared to \ucb and \ts. We implement Gaussian \ts with a prior $\mathcal{N}(\mu_0, \sigma_0^2)$ that is estimated from the empirical mean rewards of all $128$ arms. That is, for each user $j$, $\mu_0$ and $\sigma_0^2$ are the empirical mean and variance of $M_{j,1}, \dots, M_{j,128}$. We implement \ucb by taking the upper confidence bound with $\sigma = 0.796$. Our results are reported in \cref{fig:movie}. We observe that \reucb has a much lower regret than both \ucb and \ts. This indicates that \reucb can learn the biases of different bandit instances, which represent individual users.

\section{CONCLUSIONS}
\label{sec:conclusions}

We propose a random-effect bandit, a novel setting where the arm means are sampled i.i.d.\ from an unknown distribution. Using this model, we obtain an improved estimator of arm means and design an efficient UCB-like algorithm \reucb. \reucb is prior-free and we show empirically that it can outperform Thompson sampling. We analyze \reucb and prove a Bayes regret bound on its $n$-round regret, which improves over not using the random-effect structure.

Our initial results with random-effect models are encouraging. One limitation of our current approach is that \reucb is not contextual. In the future work, we plan to propose random-effect contextual bandits and provide an algorithm for them. Another limitation is that our regret analysis is under the assumption that \reucb knows $\sigma_0^2$ and $\sigma^2$. While this seems limiting, it is a weaker assumption than knowing the common mean $\mu_0$, which would be a standard assumption in the analysis of TS and \bayesucb.

\subsubsection*{Acknowledgements}

This work was partially supported by Science and Technology Innovation 2030 - Brain Science and Brain-Inspired Intelligence Project (No. 2021ZD0200204), by National Natural Science Foundation of China (No.11871459), and by Shanghai Municipal Science and Technology Major Project (No. 2018SHZDZX01). The authors would like to thank anonymous reviewers for their helpful comments.

\bibliographystyle{abbrvnat}
\bibliography{Bandit}

\clearpage
\onecolumn
\appendix




\section{Derivation of \eqref{r0-estimator}}
\label{sec:r0-estimator}

Let $\mathbf{r}_k=(r_{k,1}, r_{k,2}, \dots, r_{k,n_k})^{\top}$ be a column vector of rewards obtained by pulling arm $k$. From modeling assumptions \eqref{payoff-M1} and \eqref{payoff-R}, we get
$$\mathbf{r}_k=\mu_k\mathbf{1}_k+\mathbf{e}_k\,,$$ 
where $\mathbf{e}_k=(e_{k,1}, e_{k,2}, \dots, e_{k,n_k})^{\top}$ and $e_{k,j}\sim P_r(0, \sigma^2)$. The covariance matrix for vector $\mathbf{r}_k$ is $\mathbf{V}_k=\sigma_0^2\mathbf{1}_k\mathbf{1}_k^{\top}+\sigma^2\mathbf{I}_k$, where $\mathbf{1}_k$ is an all-ones vector of length $n_k$ and $\mathbf{I}_k$ is a $n_k \times n_k$ identity matrix. The generalized least squares estimator of $\mu_0$ minimizes the following loss
\begin{align*}
  L(\mu_0)
  = \sum_{k = 1}^K (\mathbf{r}_k - \mu_0 \mathbf{1}_k)^\top
  \mathbf{V}_k^{-1} (\mathbf{r}_k - \mu_0 \mathbf{1}_k)
\end{align*}
with respect to $\mu_0$. Using the Sherman-Morrison formula,
\begin{align*}
\mathbf{V}_k^{-1}& =
\sigma^{-2}\mathbf{I}_k-\sigma^{-4}(\sigma_0^{-2}+n_k\sigma^{-2})^{-1}\mathbf{1}_k\mathbf{1}_k^{\top} 
= \sigma^{-2}\mathbf{I}_k-\sigma^{-2}\sigma_0^2(\sigma^{2}+n_k\sigma_0^{2})^{-1}\mathbf{1}_k\mathbf{1}_k^{\top}\notag\\
& =\sigma^{-2}\mathbf{I}_k-n_k^{-1}\sigma^{-2}w_{k}\mathbf{1}_k\mathbf{1}_k^{\top}\,.
\end{align*}
Inserting the above formula into $L(\mu_0)$ yields
\begin{align*}
L(\mu_0)& =\sigma^{-2}\sum\limits_{k=1}^K
\left[\|\mathbf{r}_k - \mu_0 \mathbf{1}_k\|^2-n_kw_k(\bar{r}_k-\mu_0)^2\right]\,.
\end{align*}
The first-order derivative of $L(\mu_0)$ with respect to $\mu_0$ is 
$$\frac{\partial L(\mu_0)}{\partial \mu_0}
=2\sigma^{-2}\sum\limits_{k=1}^K(1-w_{k})n_k(\bar{r}_k-\mu_0)\,.
$$
Thus we get that 
$\mu_0$ is estimated by 
\begin{equation}\label{r0-estimator-appendix}
\bar{r}_{0}
= \frac{\sum\limits_{k=1}^K(1-w_{k})n_{k}\bar{r}_{k}}{\sum\limits_{k=1}^K(1-w_{k})n_{k}}\,.
\end{equation}

\section{Derivation of \eqref{MSE}}
\label{sec:derivation-mse}

Now we derive the variance of $\hat{\mu}_{k}$. 
We have 
\begin{align}\label{MSE0}
\text{E}[(\hat{\mu}_{k}-\mu_k)^2] = \text{E}[(\tilde{\mu}_{k}-\mu_k+\hat{\mu}_{k}-\tilde{\mu}_{k})^2] = \text{E}[(\tilde{\mu}_{k}-\mu_k)^2]+\text{E}[(\hat{\mu}_{k}-\tilde{\mu}_{k})^2]\,,
\end{align}
where, we recall, $\tilde{\mu}_{k}$ in   \eqref{mu-estimator} and $\hat{\mu}_k$ in   \eqref{blup} differ only in that $\mu_0$ is estimated by $\bar{r}_0$, and  
the first step follows from $\text{E}[(\tilde{\mu}_{k}-\mu_k)(\hat{\mu}_{k}-\tilde{\mu}_{k})]=0$ \citep{KH:84}. 
Now we derive the two terms of the right-hand of  \eqref{MSE0}. 
Note that the first term is shown in  \eqref{MSE-term1}.
For the other term,
\begin{align}\label{MSE-term2-v0}
\text{E}[(\hat{\mu}_{k}-\tilde{\mu}_{k})^2]
&=(1-w_{k})^2\text{E}[(\bar{r}_{0}-\mu_{0})^2]\,.
\end{align}
From $\bar{r}_0$ in \eqref{r0-estimator}, 
\begin{align*}
\text{Var}(\bar{r}_{0})& =\left[\sum\limits_{k=1}^K(1-w_{k})n_{k}\right]^{-2}\sum\limits_{k=1}^K(1-w_{k})^2n_{k}^2\text{Var}(\bar{r}_{k})\notag\\
& =\left[\sum\limits_{k=1}^K(1-w_{k})n_{k}\right]^{-2}\sum\limits_{k=1}^K(1-w_{k})^2n_{k}^2(\sigma_0^2+\sigma^2/n_k)\notag\\
& =\sigma^2\left[\sum\limits_{k=1}^K(1-w_{k})n_{k}\right]^{-1}\,,
\end{align*}
where the last step is from $\sigma^2+n_k\sigma_0^2=(1-w_k)^{-1}\sigma^2$. 
Inserting the result above into \eqref{MSE-term2-v0}, we have 
\begin{align}\label{MSE-term2}
\text{E}[(\hat{\mu}_{k}-\tilde{\mu}_{k})^2]
&=\frac{(1-w_{k})^2\sigma^2}{\sum\limits_{k=1}^Kn_{k}(1-w_{k})}.
\end{align}
Therefore, inserting \eqref{MSE-term1} \& \eqref{MSE-term2} into \eqref{MSE0}, 
\begin{align}\label{MSE-appendix}
\text{E}[(\hat{\mu}_{k}-\mu_k)^2]
&=w_{k}n_k^{-1}\sigma^2+\frac{(1-w_{k})^2\sigma^2}{\sum\limits_{k=1}^Kn_{k}(1-w_{k})}
=:\tau_{k}^2,
\end{align}
where the reason that we use the squares notation in $\tau_k^2$ is because $\tau_k^2$ is a mean squared error not smaller than 0.

\section{Proofs of Proposition \ref{compare-tauANDmab}}
\label{sec:proof-prop}

Note that 
\begin{align*}
\sigma^2/n_k-\tau_k^2
= (1-w_{k})n_k^{-1}\sigma^2-\frac{(1-w_{k})^2\sigma^2}{\sum\limits_{i=1}^Kn_{i}(1-w_{i})}
= (1-w_{k})n_k^{-1}\sigma^2\left[1-\frac{n_k(1-w_{k})}{\sum\limits_{i=1}^Kn_{i}(1-w_{i})}\right]\,.
\end{align*}
Obviously, $n_k(1-w_{k})<\sum\limits_{i=1}^Kn_{i}(1-w_{i})$ as long as $n_i\geq 1$ for all $i \in [K]$. Thus, when $\sigma^2>0$, we get $\tau_k^2< \sigma^2/n_k$.

\section{Estimation of $\sigma_0^2$ and $\sigma^2$}
\label{sec:estimation-3}


Our BLUP estimators depend on $\sigma_0^2$ and $\sigma^2$, which may be unknown. We can estimate these quantities, and replace $\sigma_0^2$ and $\sigma^2$ in $w_{k}$ with these estimates. Various methods for obtaining consistent estimators of $\hat{\sigma}_0^2$ and $\hat{\sigma}^2$ are available, including the method of moments, maximum likelihood, and restricted maximum likelihood. See \citet{Robinson:91} for details. 

We use the method of moments, which does not rely on the assumption of distributions. 
Unbiased quadratic estimates of $\sigma^2$ and $\sigma_0^2$ are given by 
\begin{align}
\hat{\sigma}^2& =\left[\sum_{k=1}^K(n_{k}-1)\right]^{-1}\sum_{k=1}^K\sum\limits_{j=1}^{n_{k}}(r_{k,j}-\bar{r}_{k})^2\label{variance1}
\end{align}
and
\begin{align}
 \hat{\sigma}_0^2 &
 =n_*^{-1}\sum_{k=1}^K n_{k}u_k^2\,,
 \label{variance2}
\end{align}
where 
$n_*=\sum\limits_{k=1}^Kn_{k}-\left(\sum\limits_{k=1}^Kn_{k}\right)^{-1}\sum\limits_{k=1}^Kn_{k}^2$ 
and $u_k=\bar{r}_{k}-\left(\sum\limits_{k=1}^Kn_{k}\right)^{-1}\sum\limits_{k=1}^K\sum\limits_{j=1}^{n_{k}}r_{k,j}$\,.

\section{Varying Reward Noise}
\label{sec:varying}
We can generalize the standard random effect model in \eqref{payoff-R} by eliminating the assumption of identical observation noise across all arms. Instead, we allow the noise vary across arms. Specifically, the reward of arm $k$ after the $j$-th pull is assumed to be generated i.i.d.\ as
$$r_{k,j}\sim \mathcal{N}(\mu_k, \sigma_k^2)\,,$$ 
where, compared to model \eqref{payoff-R}, variance $\sigma_k^2$ is allowed to depend on $k$. 
Accordingly, we have the estimate $\hat{\mu}_{k}^h=(1-w_{k}^h)\bar{r}_0+w_{k}^h\bar{r}_{k}$ and $\hat{\mu}_{k}^h-\mu_{k}\mid H_t \sim \mathcal{N}(0,\tau_{h;k,t}^2)$, 
where the superscript ``h" means the heteroscedasticity of reward noise among arms,
$$w_k^h=\sigma_0^2/(\sigma_0^2+\sigma_k^2/n_k) \text{ and } \tau_{h;k,t}^2=w_{k}^hn_k^{-1}\sigma_k^2+(1-w_{k}^h)^2\sigma_k^2/\sum_{k=1}^Kn_{k}(1-w_{k}^h).$$  
We consider an upper bound of these $\sigma_k^2$, e.g., $\max_k\sigma_k^2$, denoted by $\sigma^2$.
Notice 
$$\tau_{h;k,t}^2=n_k^{-1}\sigma_k^2\sigma_0^2/(\sigma_0^2+n_k^{-1}\sigma_k^2)+(1-w_{k}^h)^2\sigma_0^2/(\sum_{k=1}^Kw_{k}^h).$$ 
Because $n_k^{-1}\sigma_k^2\sigma_0^2/(\sigma_0^2+n_k^{-1}\sigma_k^2)$ is increasing of $\sigma_k^2$ and $w_k^h$ is decreasing of $\sigma_k^2$, 
we have that 
$$\tau_{h;k,t}^2\leq \tau_{k,t}^2.$$  
Therefore, we uniformly use the upper variance $\sigma^2$ across arms replacing of $\sigma_k^2$. By this way, the Bayes regret bound in Theorem \ref{them-R-Bayes} still holds.

\section{Lemmas}
\label{sec:lemma}

\begin{lemma}
\label{sec:tau} 
We have that 
\begin{align*}
\tau_{k}^2
\leq \frac{\sigma_0^2\sigma^2}{n_k\sigma_0^2+\sigma^2}(1+K^{-1}\sigma^2\sigma_0^{-2})\,.
\end{align*} 
\end{lemma}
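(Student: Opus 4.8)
The plan is to split $\tau_k^2$ according to its definition in \eqref{MSE} into its two summands and treat them separately. The first summand $w_k n_k^{-1}\sigma^2$ is exactly $\tilde{\tau}_k^2$ by \eqref{MSE-term1}, and by \eqref{wto1w} it also equals $(1-w_k)\sigma_0^2 = \sigma_0^2\sigma^2/(n_k\sigma_0^2+\sigma^2)$. Since the target bound is precisely $\tilde{\tau}_k^2\,(1+K^{-1}\sigma^2\sigma_0^{-2})$, it therefore suffices to show that the second (``synthetic'') term,
\begin{align*}
S_k := \frac{(1-w_k)^2\sigma^2}{\sum_{i=1}^K n_i(1-w_i)}\,,
\end{align*}
satisfies $S_k \leq \tilde{\tau}_k^2\cdot K^{-1}\sigma^2\sigma_0^{-2} = (1-w_k)\sigma^2/K$.

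The next step is to record two elementary monotonicity facts, both driven only by $n_i \geq 1$. From \eqref{w-def} one has $n_i(1-w_i) = \sigma^2/(\sigma_0^2 + n_i^{-1}\sigma^2)$, which is nondecreasing in $n_i$; hence each summand is at least its value at $n_i=1$, yielding the denominator lower bound $\sum_{i=1}^K n_i(1-w_i) \geq K\sigma^2/(\sigma_0^2+\sigma^2)$. By the same token, $1-w_k = n_k^{-1}\sigma^2/(\sigma_0^2+n_k^{-1}\sigma^2)$ is nonincreasing in $n_k$, so $1-w_k \leq \sigma^2/(\sigma_0^2+\sigma^2)$ for $n_k \geq 1$.

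Finally I would combine these: plugging the denominator lower bound into $S_k$ gives $S_k \leq (1-w_k)^2(\sigma_0^2+\sigma^2)/K$, and then replacing one factor of $(1-w_k)$ by its upper bound $\sigma^2/(\sigma_0^2+\sigma^2)$ gives $S_k \leq (1-w_k)\sigma^2/K$, which is exactly $\tilde{\tau}_k^2\,K^{-1}\sigma^2\sigma_0^{-2}$ as required. Adding back the first term then produces $\tau_k^2 \leq \tilde{\tau}_k^2\,(1+K^{-1}\sigma^2\sigma_0^{-2}) = \frac{\sigma_0^2\sigma^2}{n_k\sigma_0^2+\sigma^2}(1+K^{-1}\sigma^2\sigma_0^{-2})$, the claim.

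I do not expect a genuine obstacle here: the argument is a short chain of algebraic reductions using identities already established in \cref{sec:estimation-1,sec:estimation-2}. The only point requiring care is recognizing that both inequalities I need are tight exactly at $n_i=1$ and follow from the monotonicity of $w_i$ in $n_i$, a property the text already flags as important right after \eqref{wto1w}; everything else is bookkeeping.
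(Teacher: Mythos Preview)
Your proposal is correct and follows essentially the same approach as the paper: both split $\tau_k^2$ into the $w_k n_k^{-1}\sigma^2$ term and the synthetic term, bound the denominator using $n_i(1-w_i)\geq \sigma^2/(\sigma_0^2+\sigma^2)$ (equivalently $w_i\geq \sigma_0^2/(\sigma_0^2+\sigma^2)$) for $n_i\geq 1$, and then apply $1-w_k\leq \sigma^2/(\sigma_0^2+\sigma^2)$. The paper inserts an extra rewrite of the denominator via the identity $n_i(1-w_i)\sigma_0^2=w_i\sigma^2$ to express it as $\sum_i w_i$ before bounding, whereas you bound $\sum_i n_i(1-w_i)$ directly; this is purely cosmetic and your route is, if anything, slightly more streamlined.
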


\begin{proof}
Now we provide an upper bound on $\tau_k^2$. 
By using $n_k(1-w_k)\sigma_0^2=w_k\sigma^2$, we have
\begin{align*}
\tau_{k}^2&=w_{k}n_k^{-1}\sigma^2+\frac{(1-w_{k})^2\sigma^2}{\sum\nolimits_{k=1}^Kn_{k}(1-w_{k})}\notag\\
&=w_{k}n_k^{-1}\sigma^2+\frac{(1-w_{k})^2\sigma_0^2}{\sum\nolimits_{k=1}^Kw_{k}}\notag\\
&\leq w_{k}n_k^{-1}\sigma^2+K^{-1}(1-w_{k})^2(\sigma_0^2+\sigma^2)\notag\\
&= w_{k}n_k^{-1}\sigma^2+K^{-1}n_k^{-1}(1-w_{k})w_{k}\sigma^2\sigma_0^{-2}(\sigma_0^2+\sigma^2)\notag\\
&\leq w_{k}n_k^{-1}\sigma^2(1+K^{-1}\sigma^2\sigma_0^{-2})\notag\\
&= \frac{\sigma_0^2\sigma^2}{n_k\sigma_0^2+\sigma^2}(1+K^{-1}\sigma^2\sigma_0^{-2})\,,
\end{align*}
where the first inequality is from $w_k\geq \sigma_0^2/(\sigma_0^2+\sigma^2)$ and 
the last inequality is from $1-w_k\leq \sigma^2/(\sigma_0^2+\sigma^2)$ due to $n_k \geq 1$ for all $k\in[K]$. 
\end{proof}


\begin{lemma}\label{sec:mu0Dist}
Let
$r_{k, j}\sim \mathcal{N}(\mu_k, \sigma^2)$ and $\mu_k\sim\mathcal{N}(\mu_0, \sigma_0^2)$.  
Assuming that $\sigma^2$ and $\sigma_0^2$ are known, and $\mu_0$ is an improper flat prior, i.e., $p(\mu_0)\propto 1$, we have that $\mu_{k}\mid H_t  \sim \mathcal{N}(\hat{\mu}_{k,t}, \tau_{k,t}^{2})$. 
\end{lemma}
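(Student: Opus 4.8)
The plan is to obtain $\mu_k \mid H_t$ by a two-stage hierarchical Gaussian calculation: first condition on the common mean $\mu_0$ (which makes the arms independent and places us in the conjugate normal--normal setting of \cref{sec:estimation-1}), and then integrate $\mu_0$ out against its own posterior under the flat prior. Throughout I would work with sufficient statistics: since $r_{k,1},\dots,r_{k,n_{k,t}}$ are i.i.d.\ $\mathcal{N}(\mu_k,\sigma^2)$ given $\mu_k$, the sample mean $\bar{r}_{k,t}$ is sufficient for $\mu_k$ and $\bar{r}_{k,t}\mid\mu_k\sim\mathcal{N}(\mu_k,\sigma^2/n_{k,t})$, so $H_t$ may be replaced by $(\bar{r}_{k,t})_{k\in[K]}$.

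Conditioning on $\mu_0$ comes first. Because the arms are independent given $\mu_0$, only arm $k$'s data is relevant, and normal--normal conjugacy with prior $\mathcal{N}(\mu_0,\sigma_0^2)$ and likelihood $\bar{r}_{k,t}\mid\mu_k\sim\mathcal{N}(\mu_k,\sigma^2/n_{k,t})$ gives $\mu_k\mid\mu_0,H_t \sim \mathcal{N}((1-w_{k,t})\mu_0+w_{k,t}\bar{r}_{k,t},\, w_{k,t}n_{k,t}^{-1}\sigma^2)$, where the precision-weighted posterior mean simplifies to the BLUP form with weight $w_{k,t}$ of \eqref{w-def}. This is exactly $\tilde{\mu}_{k,t}$ of \eqref{mu-estimator} with variance $\tilde{\tau}_{k,t}^2$ of \eqref{MSE-term1}.

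Next comes the posterior of $\mu_0$. Marginalizing $\mu_k$ yields $\bar{r}_{k,t}\mid\mu_0\sim\mathcal{N}(\mu_0,\sigma_0^2+\sigma^2/n_{k,t})$, independently across $k$. Under the improper flat prior $p(\mu_0)\propto 1$, the posterior of $\mu_0$ is Gaussian with precision $\sum_{k}(\sigma_0^2+\sigma^2/n_{k,t})^{-1}$ and the corresponding precision-weighted mean. Using the identity $(1-w_{k,t})n_{k,t}=\sigma^2/(\sigma_0^2+\sigma^2/n_{k,t})$ established in \cref{sec:r0-estimator}, the posterior mean collapses precisely to $\bar{r}_{0,t}$ of \eqref{r0-estimator} and the posterior variance to $\sigma^2/\sum_{k}(1-w_{k,t})n_{k,t}$; in particular the flat-prior posterior is proper.

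Finally I would assemble the two stages. Since $\mu_0\mid H_t$ is Gaussian and $\mu_k\mid\mu_0,H_t$ is Gaussian with mean affine in $\mu_0$ and variance free of $\mu_0$, the pair $(\mu_0,\mu_k)\mid H_t$ is jointly Gaussian, so $\mu_k\mid H_t$ is Gaussian and it suffices to match its first two moments. The mean is $\condE{(1-w_{k,t})\mu_0+w_{k,t}\bar{r}_{k,t}}{H_t}=(1-w_{k,t})\bar{r}_{0,t}+w_{k,t}\bar{r}_{k,t}=\hat{\mu}_{k,t}$, and the law of total variance gives $w_{k,t}n_{k,t}^{-1}\sigma^2+(1-w_{k,t})^2\,\text{Var}(\mu_0\mid H_t)=\tau_{k,t}^2$, recovering \eqref{MSE}. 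I expect the main obstacle to be the treatment of the improper prior: one must verify that the $\mu_0$-posterior is proper and coincides with the generalized-least-squares estimator $\bar{r}_{0,t}$, and that the total-variance bookkeeping reproduces $\tau_{k,t}^2$ exactly. Both hinge on the algebraic identity $(1-w_{k,t})n_{k,t}=\sigma^2/(\sigma_0^2+\sigma^2/n_{k,t})$ that ties the conjugate-update weights to the definitions in \cref{sec:estimation-2}.
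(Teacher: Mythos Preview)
Your proposal is correct and follows essentially the same two-stage route as the paper: derive $\mu_k\mid\mu_0,H_t$ from normal--normal conjugacy (\cref{sec:estimation-1}), derive $\mu_0\mid H_t$ from the flat prior applied to the marginal likelihood $\bar r_{k,t}\mid\mu_0\sim\mathcal N(\mu_0,\sigma_0^2+\sigma^2/n_{k,t})$, and then combine. The only cosmetic difference is that the paper handles the improper prior as the limit $\lambda\to\infty$ of $\mu_0\sim\mathcal N(0,\lambda)$ and quotes the affine-Gaussian marginalization identity directly, whereas you treat the flat prior in one step and recover the variance via the law of total variance; both yield the same mean $\hat\mu_{k,t}$ and variance $\tau_{k,t}^2$.
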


\begin{proof}

Recall the following well-known identity. Let $Y|X=x\sim \mathcal{N}(ax+b,\sigma^2)$ and $X\sim \mathcal{N}(\mu,\sigma_x^2)$. Then
$$
Y\sim \mathcal{N}(a\mu+b,a^2\sigma_x^2+\sigma^2).
$$
Obviously, if we set $X$ to $\mu_0\mid H_t$ and $Y$ to $\mu_k\mid H_t$, we can apply this result to obtain the distribution of $\mu_k\mid H_t$ from the distributions of $\mu_k\mid \mu_0, H_t$ and $\mu_0\mid H_t$. 
The distribution of $\mu_k\mid \mu_0, H_t$ is studied in \cref{sec:estimation-1}. Under the assumptions of Lemma \ref{sec:mu0Dist}, $\mu_k\mid \mu_0, H_t$ is a Gaussian with mean in \eqref{mu-estimator} and variance in \eqref{MSE-term1}. 

Now we derive the distribution of $\mu_0\mid H_t$. Note that  $p(\mu_0) \propto 1$ is the extreme case of $\mu_0 \sim \mathcal{N}(0, \lambda)$ as $\lambda \to \infty$. 
Assuming $\mu_0 \sim \mathcal{N}(0, \lambda)$, 
$\bar{r}_k$ can be considered to be generated from the following Bayesian model:
$$
\bar{r}_k\mid \mu_0 \sim \mathcal{N}(\mu_0, \sigma_0^2 + \sigma^2 / n_k), \quad
\mu_0 \sim \mathcal{N}(0, \lambda).
$$
Thus, the distribution of $\mu_0\mid H_t$ is easily obtained as 
$$
\mu_0\mid H_t \sim \mathcal{N}(\bar{r}_0, \sigma_0^2),
$$ 
where
$$
\bar{r}_0
= \sigma_0^2 [\sigma^{-2} \sum_{k = 1}^K (1 - w_k) n_k \bar{r}_k] \quad \text{and}\quad 
\sigma_0^2
= [\lambda^{-1} + \sigma^{-2} \sum_{k = 1}^K (1 - w_k) n_k]^{-1}.
$$
Taking $\lambda \to \infty$, we have that
$$
\mu_0\mid H_t
\sim \mathcal{N}(\bar{r}_0, \sigma^2 [\sum_{k = 1}^K (1 - w_k) n_k]^{-1}).
$$
Using the above results, we obtain the distribution of $\mu_k\mid H_t$ from the distributions of $\mu_k\mid \mu_0, H_t$ and $\mu_0\mid H_t$. That distribution is a Gaussian with mean in \eqref{blup} and variance in \eqref{MSE}. This completes the proof.
\end{proof}

\section{Extension to sub-Gaussian}
\label{sec:extensions}


Our analysis can be extended to bounded sub-Gaussian random variables. Without loss of generality, we consider the support of $[0, 1]$ below.

\begin{theorem}\label{them-R-Bayes-SubR}
Consider \reucb in a $K$-armed bandit with sub-Gaussian rewards $r_{k, j} - \mu_k \sim \text{subG}(\sigma^2)$ and  $\mu_k-\mu_0\sim\text{subG}(\sigma_0^2)$ with support in $[0,1]$. Let $\sigma_0^2$ and $\sigma^2$ be known and used by \reucb. Define $m = [1+K^{-1}\sigma_0^2/(\sigma_0^2+\sigma^2)]^{-1}[1+K^{-1/2}\sigma/\sigma_0]^2$. Then
(1) for any $ a\geq m$, the $n$-round Bayes regret of \reucb is
\begin{align*}
R_n&\leq 2\left(1 + \frac{\sigma^2}{K \sigma_0^2}\right)\sqrt{
  \frac{a\sigma_0^2\log(1 + \sigma^{-2} \sigma_0^2 n)}{\log(1 + \sigma^{-2} \sigma_0^2)}K n\log n}
  + \frac{K\sigma_0^2+\sigma^2}{\sigma_0^2}\sqrt{\frac{8n\sigma_0^{2}\sigma^2}{\pi(\sigma_0^2+\sigma^2)}}\,.
\end{align*}
(2) for any $ a \geq 2m$, the $n$-round Bayes regret of \reucb is obtained by replacing the last term above with $2K(1+\log n)$.
\end{theorem}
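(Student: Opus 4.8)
The plan is to follow the proof of \cref{them-R-Bayes} almost verbatim, because the regret decomposition in \eqref{regret-decom-upper} never used Gaussianity: it relied only on the tower rule, the arm-selection rule, and the definitions of the events $E_{R;t},E_{L;t}$. Hence it transfers unchanged, and \reucb keeps the confidence radius $c_{k,t}=\sqrt{a\tau_{k,t}^2\log t}$ that it actually computes. Only two ingredients must be replaced: the exact Gaussian posterior of \cref{sec:mu0Dist}, and the Gaussian tail computations used to control the three terms of \eqref{regret-decom-upper}.

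First I would establish the sub-Gaussian analogue of \cref{sec:mu0Dist}. Writing $\hat{\mu}_{k,t}-\mu_k=(\tilde{\mu}_{k,t}-\mu_k)+(1-w_{k,t})(\bar{r}_{0,t}-\mu_0)$, the first summand is the known-$\mu_0$ BLUP error, whose posterior is $\text{subG}(\tilde{\tau}_{k,t}^2)$ with $\tilde{\tau}_{k,t}^2$ from \eqref{MSE-term1}, and the second is the estimation error of $\mu_0$, whose posterior proxy is $\tau_{k,t}^2-\tilde{\tau}_{k,t}^2$. Because these two errors may be correlated, I would combine them with sub-additivity of the sub-Gaussian norm (standard deviations add, not variances), obtaining $\mu_k-\hat{\mu}_{k,t}\mid H_t\sim\text{subG}(\tau_{k,t}^{*2})$ with $\tau_{k,t}^{*2}=\big(\sqrt{\tilde{\tau}_{k,t}^2}+\sqrt{\tau_{k,t}^2-\tilde{\tau}_{k,t}^2}\big)^2$. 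The same arithmetic as in \cref{sec:tau} then yields $\tau_{k,t}^{*2}\le\tilde{\tau}_{k,t}^2\,(1+K^{-1/2}\sigma/\sigma_0)^2$ for $n_{k,t}\ge1$. Since concentration is now governed by $\tau_{k,t}^{*2}\ge\tau_{k,t}^2$ while the radius still uses $\tau_{k,t}^2$, the tail probability is $\exp(-c_{k,t}^2/(2\tau_{k,t}^{*2}))=t^{-\frac{a}{2}\tau_{k,t}^2/\tau_{k,t}^{*2}}$; taking $m$ to be the worst-case inflation $\sup_{k,\,n_{k,t}\ge1}\tau_{k,t}^{*2}/\tau_{k,t}^2$ (which the stated $m$ upper-bounds) makes this at most $t^{-1/2}$ when $a\ge m$ and $t^{-1}$ when $a\ge 2m$, exactly the roles played by $\delta_t$ in \cref{them-R-Bayes}.

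For the leading (exploration) term $2\,\E{\sum\nolimits_{t=1}^n c_{I_t,t}}$ I would reuse the Cauchy--Schwarz step and the $x/\log(1+x)$ telescoping verbatim, invoking $\tau_{k,t}^2\le\beta\tilde{\tau}_{k,t}^2$ from \cref{sec:tau} with $\beta=1+\sigma^2/(K\sigma_0^2)$. This reproduces \eqref{regret-decom-upper1}, and overbounding $\sqrt{\beta}\le\beta$ (valid since $\beta\ge1$) places the factor $(1+\sigma^2/(K\sigma_0^2))$ outside the square root, matching the first term of the claimed bound.

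The delicate step, and the one I expect to be hardest, is the two confidence-failure terms, where the Gaussian truncated-moment integral of \cref{them-R-Bayes} is unavailable. The clean route uses boundedness: on support $[0,1]$ we have $|\hat{\mu}_{k,t}-\mu_k|\le1$, so each failure term is at most $\condprob{\bar{E}_{R;t}}{H_t}\le K\exp(-c_{k,t}^2/(2\tau_{k,t}^{*2}))\le Kt^{-a/(2m)}$ by a union bound. Summing gives $\le 2K\sqrt{n}$ for $a\ge m$ and $\le K(1+\log n)$ for $a\ge 2m$, and doubling over $\bar{E}_{R;t},\bar{E}_{L;t}$ yields the $2K(1+\log n)$ second term of part~(2). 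Recovering the sharper Gaussian-form second term of part~(1) is the real obstacle: the boundedness argument only delivers $O(K\sqrt{n})$, so matching it requires a finer truncated-first-moment estimate of $\condE{(\hat{\mu}_{k,t}-\mu_k)\I{\hat{\mu}_{k,t}-\mu_k\ge c_{k,t}}}{H_t}$ for sub-Gaussian laws. One must be careful here, since the exact Gaussian constant $(2\pi)^{-1/2}$ does not survive for arbitrary sub-Gaussian distributions; the estimate must instead be pushed through the sub-Gaussian tail integral $\int_{c}^{\infty}e^{-x^2/(2\tau_{k,t}^{*2})}dx$ together with the calibration $a\ge m$.
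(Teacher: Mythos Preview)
Your proposal is correct and follows essentially the same route as the paper. The paper also reuses the decomposition \eqref{regret-decom-upper} unchanged, introduces the inflated proxy $\tau_{k,t}^{*2}$ via sub-additivity of sub-Gaussian norms (its Lemma~\ref{sec:tau_star} is exactly your $(\sqrt{\tilde{\tau}_{k,t}^2}+\sqrt{\tau_{k,t}^2-\tilde{\tau}_{k,t}^2})^2$), bounds the ratio $\tau_{k,t}^{2}/\tau_{k,t}^{*2}\ge 1/m$ by combining an upper bound on $\tau_{k,t}^{*2}$ with a \emph{lower} bound on $\tau_{k,t}^{2}$ (Lemmas~\ref{sec:tau*}, \ref{sec:tau2}, \ref{lemma:tauTotau}), and controls the failure terms purely through the $[0,1]$ support, not through any truncated-moment integral. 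The leading term is handled exactly as you describe, with $\sqrt{\beta}\le\beta$ pulling the factor outside the root.

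Your hesitation about part~(1) is well placed: the paper's own proof does \emph{not} recover the Gaussian-looking second term stated in the theorem. From the bounded-support argument it obtains $4K\sqrt{n}$ for $a\ge m$ and $2K(1+\log n)$ for $a\ge 2m$, precisely what your ``clean route'' gives. The expression $\frac{K\sigma_0^2+\sigma^2}{\sigma_0^2}\sqrt{8n\sigma_0^{2}\sigma^2/(\pi(\sigma_0^2+\sigma^2))}$ in the theorem statement appears to have been carried over from \cref{them-R-Bayes} and is not what the sub-Gaussian proof actually establishes; you need not try to match it.
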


\begin{proof}
Under the sub-Gaussian assumptions that $\mu_k-\mu_0\sim \text{subG}(\sigma_0^2)$ and
$\bar{r}_{k,t}-\mu_k \sim \text{subG}(\sigma^2/n_k)$, Lemma \ref{sec:tau_star} shows that $\mu_{k}-\hat{\mu}_{k,t} \sim \text{subG}(\tau_{k,t}^{*2})$ for $k\in [K]$, 
where  
\begin{align}
\tau_{k,t}^{*2}&=\sigma^2\left[\sqrt{\frac{w_k}{n_k}}+\sqrt{\frac{(1-w_k)^2}{\sum\nolimits_{k=1}^K(1-w_k)n_k}}\right]^2.\label{tau_SG} 
\end{align}
Lemma \ref{sec:tau*} of the Appendix shows that 
\begin{align}
\tau_{k,t}^{*2}
&\leq \frac{\sigma_0^2\sigma^2}{n_k\sigma_0^2+\sigma^2}\left(1+\sqrt{K^{-1}\sigma^2\sigma_0^{-2}}\right)^2\label{tau_SG-ineq}. 
\end{align}

Note that $c_{k,t}=\sqrt{2\tau_{k,t}^{2}\log(1/\delta_t)}$, $E_{R;t}$, and $E_{L;t}$. 
We have that 
\begin{align}\label{decom1-result2}
\condE{(\hat{\mu}_{I_t,t}-\mu_{I_t})\mathds{1}\{\bar{E}_{R;t}\}}{H_t}
& \leq \condE{\mathds{1}\{\bar{E}_{R;t}\}}{H_t},
\end{align}
where the inequality is from 
$(\hat{\mu}_{I_t,t}-\mu_{I_t})\in[0,1]$ on $\bar{E}_{R;t}$ due to the support $[0,1]$. 
It follows that when $a\geq m$, 
\begin{align*}
\E{\sum\limits_{t=1}^n\condE{(\hat{\mu}_{I_t,t}-\mu_{I_t})\mathds{1}\{\bar{E}_{R;t}\}}{H_t}}
& \leq \E{\sum\limits_{t=1}^n\condE{\mathds{1}\{\bar{E}_{R;t}\}}{H_t}}
= \sum\limits_{t=1}^n\E{\mathds{1}\{\bar{E}_{R;t}\}}\notag\\
&\leq K\sum\limits_{t=1}^n\delta_t^{1/m}\leq K\sum\limits_{t=1}^nt^{-1/2} 
\leq 2K\sqrt{n}, 
\end{align*}
where the second inequality is from $\tau_{k,t}^{2}/\tau_{k,t}^{*2}\geq 1/m$ shown in Lemma \ref{lemma:tauTotau}, and the third inequality is from the $a\geq m$. 
Similarly we have 
$\E{\sum\limits_{t=1}^n\condE{(\mu_{I_*}-\hat{\mu}_{I_*,t})\mathds{1}\{\bar{E}_{L;t}\}}{H_t}}
 \leq 2K\sqrt{n}$.

Similarly to \eqref{regret-decom-upper1}, we have that 
$$\E{\sum\limits_{t=1}^n\sqrt{2\tau_{I_t,t}^{2}\log(1/\delta_t)}}\leq \left(1 + \frac{\sigma^2}{K \sigma_0^2}\right)\sqrt{
  \frac{a\sigma_0^2\log(1 + \sigma^{-2} \sigma_0^2 n)}{\log(1 + \sigma^{-2} \sigma_0^2)}
   K n\log n}.$$
Therefore, when $a\geq m$,  the regret is bounded as 
\begin{align*}
\E{R_n} 
&\leq 2\left(1 + \frac{\sigma^2}{K \sigma_0^2}\right)\sqrt{
  \frac{a\sigma_0^2\log(1 + \sigma^{-2} \sigma_0^2 n)}{\log(1 + \sigma^{-2} \sigma_0^2)} K n\log n}+4K\sqrt{n}.
\end{align*}
Similarly, when $a\geq 2m$,  the regret is bounded as 
\begin{align*}
\E{R_n} 
&\leq 2\left(1 + \frac{\sigma^2}{K \sigma_0^2}\right)\sqrt{
  \frac{a\sigma_0^2\log(1 + \sigma^{-2} \sigma_0^2 n)}{\log(1 + \sigma^{-2} \sigma_0^2)} K n\log n}+4K(1+\log n).
\end{align*}
\end{proof}

When comparing \cref{them-R-Bayes,them-R-Bayes-SubR}, the regret is of the same order. Since the assumption of $r_{k, j} - \mu_k \sim \text{subG}(\sigma^2)$ allows for modeling arm-dependent reward noise, such as $\sigma^2 = 1 / 4$ in Bernoulli bandits, \cref{them-R-Bayes-SubR} holds for Bernoulli bandits. In \cref{sec:experiments}, we experiment with Bernoulli bandits. Unlike \cref{them-R-Bayes}, \cref{them-R-Bayes-SubR} requires that $a\geq m$ or $a\geq 2m$. We note that $m$ in \cref{them-R-Bayes-SubR} is typically small, and approaches $1$ as $K$ and $\sigma_0^2/\sigma^2$ increase.

\begin{lemma}\label{sec:tau_star}
We have that for $k\in [K]$
$$\mu_{k}-\hat{\mu}_{k,t} \sim \text{subG}(\tau_{k,t}^{*2}),$$ where  
\begin{align*}
\tau_{k,t}^{*2}&=\sigma^2\left[\sqrt{\frac{w_k}{n_k}}+\sqrt{\frac{(1-w_k)^2}{\sum\nolimits_{k=1}^K(1-w_k)n_k}}\right]^2. 
\end{align*}
\end{lemma}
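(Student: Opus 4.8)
The plan is to decompose the error $\mu_k - \hat{\mu}_{k,t}$ into two pieces that mirror the two square-roots inside $\tau_{k,t}^{*2}$, bound the sub-Gaussian parameter of each piece, and then combine them. The crucial tool is how sub-Gaussian parameters behave under addition: for \emph{independent} variables the variance proxies add, i.e.\ $X + Y \sim \text{subG}(\sigma_X^2 + \sigma_Y^2)$, whereas for \emph{arbitrarily dependent} variables only the weaker triangle inequality $X + Y \sim \text{subG}((\sigma_X + \sigma_Y)^2)$ holds. The latter follows from H\"older's inequality applied to the moment generating functions with exponents $p = (\sigma_X + \sigma_Y)/\sigma_X$ and $q = (\sigma_X + \sigma_Y)/\sigma_Y$. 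I would state both facts at the outset, since the proof uses each exactly once.

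First I would write $\mu_k - \hat{\mu}_{k,t} = (\mu_k - \tilde{\mu}_{k,t}) + (1 - w_k)(\mu_0 - \bar{r}_0)$, using that $\hat{\mu}_{k,t}$ in \eqref{blup} and $\tilde{\mu}_{k,t}$ in \eqref{mu-estimator} differ only by replacing $\mu_0$ with $\bar{r}_0$. For the first piece, I expand $\mu_k - \tilde{\mu}_{k,t} = (1 - w_k)(\mu_k - \mu_0) + w_k(\mu_k - \bar{r}_k)$. These two summands are \emph{independent}, because the offset $\delta_k = \mu_k - \mu_0$ is independent of the reward noise $\bar{r}_k - \mu_k$; invoking $\mu_k - \mu_0 \sim \text{subG}(\sigma_0^2)$ and $\bar{r}_k - \mu_k \sim \text{subG}(\sigma^2/n_k)$ and adding variance proxies gives $(1 - w_k)^2 \sigma_0^2 + w_k^2 \sigma^2/n_k$, which collapses to $w_k n_k^{-1}\sigma^2$ by \eqref{MSE-term1}. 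Hence the first piece is $\text{subG}(\sigma_1^2)$ with $\sigma_1 = \sqrt{w_k \sigma^2/n_k}$.

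Next I would handle the second piece. Writing $\bar{r}_0 - \mu_0 = \sum_i c_i (\bar{r}_i - \mu_0)$ with $c_i = (1 - w_i) n_i / \sum_j (1 - w_j) n_j$, each term splits as $\bar{r}_i - \mu_0 = (\bar{r}_i - \mu_i) + (\mu_i - \mu_0) \sim \text{subG}(\sigma^2/n_i + \sigma_0^2)$ (independence within each term), and the terms are independent across arms $i$. Adding variance proxies yields $\bar{r}_0 - \mu_0 \sim \text{subG}\big(\sum_i c_i^2 (\sigma^2/n_i + \sigma_0^2)\big)$. Using the identities $(1 - w_i) n_i = w_i \sigma^2/\sigma_0^2$ and $\sigma_0^2 + \sigma^2/n_i = \sigma_0^2/w_i$, derived from \eqref{w-def} and \eqref{wto1w}, the summand becomes $w_i \sigma^4 / (\sigma_0^2 S^2)$ with $S = \sum_j (1 - w_j) n_j$, and since $\sum_i w_i = \sigma_0^2 S/\sigma^2$ the whole sum simplifies to $\sigma^2/S$. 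Thus the second piece $(1 - w_k)(\mu_0 - \bar{r}_0)$ is $\text{subG}(\sigma_2^2)$ with $\sigma_2 = \sqrt{(1 - w_k)^2 \sigma^2 / \sum_i (1 - w_i) n_i}$.

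The main obstacle — and the reason the stated bound carries a squared sum of square-roots rather than a sum of squares — is that the two pieces are \emph{not} independent: both involve arm $k$'s average $\bar{r}_k$, the second through the term $c_k(\bar{r}_k - \mu_0)$ inside $\bar{r}_0$. I therefore cannot add their variance proxies and must instead apply the sub-Gaussian triangle inequality, obtaining $\mu_k - \hat{\mu}_{k,t} \sim \text{subG}((\sigma_1 + \sigma_2)^2)$. Substituting $\sigma_1$ and $\sigma_2$ recovers exactly $\tau_{k,t}^{*2} = \sigma^2\big[\sqrt{w_k/n_k} + \sqrt{(1 - w_k)^2 / \sum_i (1 - w_i) n_i}\big]^2$, which completes the proof. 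The only care needed is to apply independence \emph{within} each piece (to get the tight $\sigma_1, \sigma_2$) but dependence-robust addition \emph{across} the two pieces.
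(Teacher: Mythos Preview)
Your proposal is correct and follows essentially the same route as the paper: the paper decomposes $\hat{\mu}_{k,t}-\mu_k$ into $\tilde{\mu}_{k,t}-\mu_k$ and $(1-w_k)(\bar{r}_{0,t}-\mu_0)$, uses independence within the first piece to obtain the proxy $w_k\sigma^2/n_k$, asserts the proxy $\sigma^2/\sum_i(1-w_i)n_i$ for $\bar{r}_{0,t}-\mu_0$, and then combines the two pieces via the dependence-robust triangle inequality (their Fact 2, property (2)). Your write-up is actually more explicit than the paper's, since you spell out the derivation of the variance proxy for $\bar{r}_0-\mu_0$ (which the paper simply states) and you articulate why the two pieces cannot be treated as independent, which the paper leaves implicit.
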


Notice 
\begin{align*}
\hat{\mu}_{k,t}-\mu_k & =\tilde{\mu}_{k,t}-\mu_k+\hat{\mu}_{k,t}-\tilde{\mu}_{k,t}, 
\end{align*}
where $\tilde{\mu}_{k,t}-\mu_k=w_k(\bar{r}_{k,t}-\mu_k)+(1-w_k)(\mu_k-\mu_{0})$ and $\hat{\mu}_{k,t}-\tilde{\mu}_{k,t}= (1-w_k)(\bar{r}_{0,t}-\mu_0)$. 
From the properties of sub-Gaussian (Fact 2) and the independence between $\mu_0-\mu_k$ and $\bar{r}_{k,t}-\mu_k$,  
we have 
\begin{align*}
\tilde{\mu}_{k,t}-\mu_k  
& \sim \text{subG}\left(w_k\sigma^2/n_k\right); \notag\\
\bar{r}_{0,t}-{\mu}_{0}  
& \sim \text{subG}\left(\sigma^2[\sum\limits_{k=1}^K(1-w_k)n_k]^{-1}\right).
\end{align*}
Thus, the properties of sub-Gaussian (Fact 2) tell us that 
$$\tau_{k,t}^{*2}=\sigma^2\left[\sqrt{\frac{w_k}{n_k}}+\frac{1-w_k}{\sqrt{\sum\limits_{k=1}^K(1-w_k)n_k}}\right]^2,$$
i.e., 
$\hat{\mu}_{k,t}-\mu_k$ is sub-Gaussian with variance proxy $\tau_{k,t}^{*2}$. 

\begin{lemma}\label{sec:tau*}
\begin{align*}
\tau_{k,t}^{*2}&\leq  \frac{\sigma_0^2\sigma^2}{n_k\sigma_0^2+\sigma^2}\left(1+\sqrt{K^{-1}\sigma^2\sigma_0^{-2}}\right)^2. 
\end{align*}
\end{lemma}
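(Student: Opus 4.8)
The plan is to bound the bracketed sum in \eqref{tau_SG} by a scalar multiple of its first summand $\sqrt{w_k/n_k}$, and then recognize $\sigma^2 w_k/n_k$ as the target numerator. Since every quantity under the square roots is nonnegative, it suffices to prove the single cross-term inequality
\begin{equation*}
\sqrt{\frac{(1-w_k)^2}{\sum_{i=1}^K (1-w_i)n_i}}
\leq \sqrt{\frac{w_k}{n_k}}\,\sqrt{K^{-1}\sigma^2\sigma_0^{-2}}\,.
\end{equation*}
Once this holds, factoring $\sqrt{w_k/n_k}$ out of the bracket in \eqref{tau_SG} gives $\tau_{k,t}^{*2}\leq \sigma^2 (w_k/n_k)(1+\sqrt{K^{-1}\sigma^2\sigma_0^{-2}})^2$, and the identity $\sigma^2 w_k/n_k = \sigma_0^2\sigma^2/(n_k\sigma_0^2+\sigma^2)$, which follows directly from \eqref{w-def}, turns the prefactor into the claimed $\sigma_0^2\sigma^2/(n_k\sigma_0^2+\sigma^2)$.

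To establish the displayed inequality I would first rewrite both sides using the identity $w_k n_k^{-1}\sigma^2 = (1-w_k)\sigma_0^2$ from \eqref{wto1w}. This gives $w_k/n_k = (1-w_k)\sigma_0^2/\sigma^2$, so the right-hand side collapses to $\sqrt{(1-w_k)/K}$. The same identity, summed over arms, yields $\sum_{i=1}^K (1-w_i)n_i = (\sigma^2/\sigma_0^2)\sum_{i=1}^K w_i$. After squaring the target inequality and cancelling one factor of $1-w_k>0$, the whole claim reduces to the clean inequality
\begin{equation*}
K(1-w_k)\leq \sum_{i=1}^K (1-w_i)n_i\,.
\end{equation*}

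The main (and essentially only) obstacle is this last inequality, and I expect to dispatch it with the same monotonicity facts already used in the proof of Lemma~\ref{sec:tau}. On the left, $1-w_k = \sigma^2/(n_k\sigma_0^2+\sigma^2)$ is decreasing in $n_k$, so $1-w_k\leq \sigma^2/(\sigma_0^2+\sigma^2)$ whenever $n_k\geq 1$. On the right, each term $(1-w_i)n_i = n_i\sigma^2/(n_i\sigma_0^2+\sigma^2)$ is increasing in $n_i$, so $(1-w_i)n_i\geq \sigma^2/(\sigma_0^2+\sigma^2)$ for $n_i\geq 1$. Chaining these, $K(1-w_k)\leq K\sigma^2/(\sigma_0^2+\sigma^2)\leq \sum_{i=1}^K (1-w_i)n_i$, which closes the argument. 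Equivalently, one may bound $\sum_i w_i\geq K\sigma_0^2/(\sigma_0^2+\sigma^2)$ via $w_i\geq \sigma_0^2/(\sigma_0^2+\sigma^2)$, exactly as in Lemma~\ref{sec:tau}, and substitute into the rewritten second term. Since $n_i\geq 1$ holds for all arms after the initialization phase of \reucb, every monotonicity bound applies and the lemma follows.
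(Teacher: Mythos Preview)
Your argument is correct and follows essentially the same route as the paper's proof: both use the identity \eqref{wto1w} to rewrite the second summand, then invoke the two monotonicity bounds $1-w_k\leq \sigma^2/(\sigma_0^2+\sigma^2)$ and $\sum_i w_i\geq K\sigma_0^2/(\sigma_0^2+\sigma^2)$ (equivalently $(1-w_i)n_i\geq \sigma^2/(\sigma_0^2+\sigma^2)$) that come from $n_i\geq 1$. Your reduction to the single inequality $K(1-w_k)\leq \sum_i (1-w_i)n_i$ is a slightly cleaner packaging of the same chain of bounds the paper writes out line by line.
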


\begin{proof}
Now we provide an upper bound on $\tau_k^{*2}$. 
By making using of $n_k(1-w_k)\sigma_0^2=w_k\sigma^2$, we have that 
\begin{align*}
\tau_{k,t}^{*2}&=\sigma^2\left[\sqrt{w_{k}n_k^{-1}}+\sqrt{\frac{(1-w_{k})^2}{\sum\nolimits_{k=1}^Kn_{k}(1-w_{k})}}\right]^2\notag\\
&=\left[\sqrt{\sigma^2w_{k}n_k^{-1}}+\sqrt{\frac{(1-w_{k})^2\sigma_0^2}{\sum\nolimits_{k=1}^Kw_{k}}}\right]^2\notag\\
&\leq\left[\sqrt{\sigma^2w_{k}n_k^{-1}}+\sqrt{K^{-1}(1-w_{k})^2(\sigma_0^2+\sigma^2)}\right]^2\notag\\
&=\left[\sqrt{\sigma^2w_{k}n_k^{-1}}+\sqrt{K^{-1}n_k^{-1}(1-w_{k})w_{k}\sigma^2\sigma_0^{-2}(\sigma_0^2+\sigma^2)}\right]^2\notag\\
&\leq \left[\sqrt{\sigma^2w_{k}n_k^{-1}}(1+\sqrt{K^{-1}\sigma^2\sigma_0^{-2}})\right]^2\notag\\
&= \frac{\sigma_0^2\sigma^2}{n_k\sigma_0^2+\sigma^2}\left(1+\sqrt{K^{-1}\sigma^2\sigma_0^{-2}}\right)^2,
\end{align*}
where the first inequality is from $w_k\geq \sigma_0^2/(\sigma_0^2+\sigma^2)$, and 
the last inequality is from $1-w_k\leq \sigma^2/(\sigma_0^2+\sigma^2)$.

\end{proof}

\begin{lemma}\label{sec:tau2}
\begin{align*}
\tau_{k,t}^2&\geq \frac{\sigma_0^2\sigma^2}{n_k\sigma_0^2+\sigma^2}(1+K^{-1}\sigma_0^2/(\sigma_0^{2}+\sigma^2)).
\end{align*}
\end{lemma}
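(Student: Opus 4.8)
The plan is to mirror the proof of \cref{sec:tau}, reversing every inequality, since \cref{sec:tau2} is the matching lower bound on the same quantity $\tau_{k,t}^2$ defined in \eqref{MSE}. First I would rewrite the second (cross-arm) term of $\tau_{k,t}^2$ in the cleaner form used there: applying the identity $n_k(1-w_k)\sigma_0^2 = w_k\sigma^2$ from \eqref{wto1w} to the denominator gives $\sum_{j=1}^K n_j(1-w_j) = \sigma^2\sigma_0^{-2}\sum_{j=1}^K w_j$, so that
\begin{align*}
\tau_{k,t}^2 = w_k n_k^{-1}\sigma^2 + \frac{(1-w_k)^2\sigma_0^2}{\sum_{j=1}^K w_j}\,.
\end{align*}
I would note that the first term equals $\sigma_0^2\sigma^2/(n_k\sigma_0^2+\sigma^2) = w_k n_k^{-1}\sigma^2$, which is exactly the prefactor appearing in the claim.

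Next I would lower bound the cross-arm term. Because $w_j < 1$ for every arm with $n_j \geq 1$ (from \eqref{w-def}), we have $\sum_{j=1}^K w_j < K$, hence the term $(1-w_k)^2\sigma_0^2/\sum_{j} w_j$ exceeds $K^{-1}(1-w_k)^2\sigma_0^2$. I would then use the identity once more, in the form $(1-w_k)^2\sigma_0^2 = (1-w_k)\,w_k n_k^{-1}\sigma^2$, to factor out the prefactor $w_k n_k^{-1}\sigma^2$, leaving a residual factor of $K^{-1}(1-w_k)$. Collecting terms yields
\begin{align*}
\tau_{k,t}^2 \geq w_k n_k^{-1}\sigma^2\bigl(1 + K^{-1}(1-w_k)\bigr)\,,
\end{align*}
so it would remain only to bound $1-w_k$ from below by $\sigma_0^2/(\sigma_0^2+\sigma^2)$ to recover the stated constant.

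The last step is the main obstacle, and it is exactly where the lower bound is more delicate than its upper counterpart. In \cref{sec:tau} the corresponding step uses $1-w_k \leq \sigma^2/(\sigma_0^2+\sigma^2)$, which holds for all $n_k \geq 1$ because $1-w_k$ is largest at $n_k = 1$. For the lower bound I would want the reverse inequality $1-w_k \geq \sigma_0^2/(\sigma_0^2+\sigma^2)$; however $1-w_k = n_k^{-1}\sigma^2/(\sigma_0^2 + n_k^{-1}\sigma^2)$ is \emph{decreasing} in $n_k$ and tends to $0$, and even at $n_k = 1$ it equals $\sigma^2/(\sigma_0^2+\sigma^2)$, which dominates $\sigma_0^2/(\sigma_0^2+\sigma^2)$ only when $\sigma^2 \geq \sigma_0^2$. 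Consequently no positive lower bound of this form holds uniformly in $n_k$. I therefore expect the clean mirror argument to deliver the claimed bound only in a restricted regime (small $n_k$, such as the initial exploration rounds, or $\sigma^2$ large relative to $\sigma_0^2$), and a fully general statement would need either an extra hypothesis or a sharper comparison of $\sum_{j} w_j$ against the specific $1-w_k$ rather than the crude bound $\sum_{j} w_j < K$.
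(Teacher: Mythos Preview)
Your approach is exactly the paper's: rewrite the cross-arm term via the identity $n_j(1-w_j)\sigma_0^2=w_j\sigma^2$, bound $\sum_j w_j$ by $K$ (the paper phrases this as $w_j\leq 1$), factor out $w_k n_k^{-1}\sigma^2$, and then replace $1-w_k$ by $\sigma_0^2/(\sigma_0^2+\sigma^2)$. The paper's proof is, line for line, what you wrote through the display
\[
\tau_{k,t}^2 \geq w_k n_k^{-1}\sigma^2\bigl(1 + K^{-1}(1-w_k)\bigr)\,,
\]
and it then closes by simply asserting ``the last inequality is from $1-w_k\geq \sigma_0^2/(\sigma_0^2+\sigma^2)$.''

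Your skepticism about that last step is well founded. The paper offers no justification for $1-w_k\geq \sigma_0^2/(\sigma_0^2+\sigma^2)$, and as you compute, $1-w_k=n_k^{-1}\sigma^2/(\sigma_0^2+n_k^{-1}\sigma^2)$ is decreasing in $n_k$ with limit $0$; already at $n_k=1$ it equals $\sigma^2/(\sigma_0^2+\sigma^2)$, which is at least $\sigma_0^2/(\sigma_0^2+\sigma^2)$ only when $\sigma^2\geq\sigma_0^2$. A concrete counterexample: with $\sigma_0^2=\sigma^2=1$ and $n_k=4$ one gets $1-w_k=1/5<1/2$. So the paper's own argument has the gap you identified; it does not supply a step you are missing. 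The lemma as stated (uniformly in $n_k$) does not follow from the displayed chain, and your diagnosis that an extra hypothesis or a sharper estimate is needed is correct.
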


\begin{proof}
Now we provide a lower bound on $\tau_k^2$. 
Similarly, we have that 
\begin{align*}
\tau_{k,t}^2&=w_{k}n_k^{-1}\sigma^2+\frac{(1-w_{k})^2\sigma_0^2}{\sum\nolimits_{k=1}^Kw_{k}}\notag\\
&\geq w_{k}n_k^{-1}\sigma^2+K^{-1}(1-w_{k})^2\sigma_0^2\notag\\
&= w_{k}n_k^{-1}\sigma^2+K^{-1}n_k^{-1}(1-w_{k})w_{k}\sigma^2\notag\\
&\geq w_{k}n_k^{-1}\sigma^2(1+K^{-1}\sigma_0^2/(\sigma_0^2+\sigma^2))\notag\\
&= \frac{\sigma_0^2\sigma^2}{n_k\sigma_0^2+\sigma^2}(1+K^{-1}\sigma_0^2/(\sigma_0^2+\sigma^2)),
\end{align*}
where the first inequality is from $w_k\leq 1$, and 
the last inequality is from $1-w_k\geq \sigma_0^2/(\sigma_0^2+\sigma^2)$.
\end{proof}

\begin{lemma}\label{lemma:tauTotau}
\begin{align*}
\frac{\tau_{k,t}^2}{\tau_k^{*2}}&\geq  \frac{1+\sigma_0^2/(K(\sigma_0^2+\sigma^2))}{(1+K^{-1/2}\sigma/\sigma_0)^2}.
\end{align*}
\end{lemma}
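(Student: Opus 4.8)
The plan is to obtain the bound directly by combining the two preceding auxiliary lemmas: a lower bound on the numerator $\tau_{k,t}^2$ and an upper bound on the denominator $\tau_{k,t}^{*2}$. The crucial observation is that both of these bounds, as established in \cref{sec:tau2} and \cref{sec:tau*}, share the identical leading factor $\frac{\sigma_0^2 \sigma^2}{n_k \sigma_0^2 + \sigma^2}$, so forming the ratio cancels this factor exactly and leaves only the dimensionless correction terms that appear in the claim.

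Concretely, I would first invoke \cref{sec:tau2}, which gives
\begin{align*}
  \tau_{k,t}^2
  \geq \frac{\sigma_0^2 \sigma^2}{n_k \sigma_0^2 + \sigma^2}
  \left(1 + K^{-1} \sigma_0^2 / (\sigma_0^2 + \sigma^2)\right)\,.
\end{align*}
I would then invoke \cref{sec:tau*}, which gives
\begin{align*}
  \tau_{k,t}^{*2}
  \leq \frac{\sigma_0^2 \sigma^2}{n_k \sigma_0^2 + \sigma^2}
  \left(1 + \sqrt{K^{-1} \sigma^2 \sigma_0^{-2}}\right)^2\,.
\end{align*}
Dividing the first display by the second, the common factor $\frac{\sigma_0^2 \sigma^2}{n_k \sigma_0^2 + \sigma^2}$ cancels, and rewriting $\sqrt{K^{-1} \sigma^2 \sigma_0^{-2}} = K^{-1/2} \sigma / \sigma_0$ in the denominator yields exactly the stated inequality. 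Since the inequality for $\tau_{k,t}^2$ points the correct way (a lower bound in the numerator) and the inequality for $\tau_{k,t}^{*2}$ also points the correct way (an upper bound in the denominator), the combined ratio is a valid lower bound, so no reversal of inequality directions occurs.

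This argument is essentially mechanical once the two supporting lemmas are in place, so there is no substantive obstacle at this stage. The only point requiring minor care is bookkeeping: verifying that the leading factors in \cref{sec:tau2} and \cref{sec:tau*} agree term-for-term so that they truly cancel, and that the algebraic simplification $\sqrt{K^{-1} \sigma^2 \sigma_0^{-2}} = K^{-1/2} \sigma / \sigma_0$ is applied correctly. The genuine work of the argument has already been absorbed into establishing \cref{sec:tau2} and \cref{sec:tau*}, where the bounds $w_k \geq \sigma_0^2/(\sigma_0^2 + \sigma^2)$ and $1 - w_k \leq \sigma^2/(\sigma_0^2 + \sigma^2)$ (valid since $n_k \geq 1$) are used to control the cross terms; the present lemma merely packages their quotient.
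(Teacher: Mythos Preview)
Your proposal is correct and matches the paper's own proof essentially verbatim: the paper simply invokes \cref{sec:tau2} for the lower bound on $\tau_{k,t}^2$ and \cref{sec:tau*} for the upper bound on $\tau_{k,t}^{*2}$, then divides so that the common factor $\sigma_0^2\sigma^2/(n_k\sigma_0^2+\sigma^2)$ cancels. There is nothing to add.
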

\begin{proof}
From Lemmas \ref{sec:tau2} \& \ref{sec:tau*}, we use the upper bound of $\tau_{k,t}^{*2}$ and the lower bound of $\tau_{k,t}^{2}$. Then the result is proved. 
\end{proof}



\section{Some Facts}
\label{sec:fact}

Let $X$ and $Y$ be sub-Gaussian with variance proxies $\sigma^2$ and $\tau^2$, respectively. Then 
(1) $aX$ is sub-Gaussian with variance proxy $a^2\sigma^2$; 
(2) $X+Y$ is sub-Gaussian with variance proxy $(\sigma+\tau)^2$; 
and (3) if $X$ and $Y$ are independent, $X+Y$ is sub-Gaussian with variance proxy $\sigma^2+\tau^2$. 

\section{Additional Experiments}
\label{sec:add-exp}

\begin{figure}[htbp]
\centering
\includegraphics[keepaspectratio,width=0.495\linewidth]{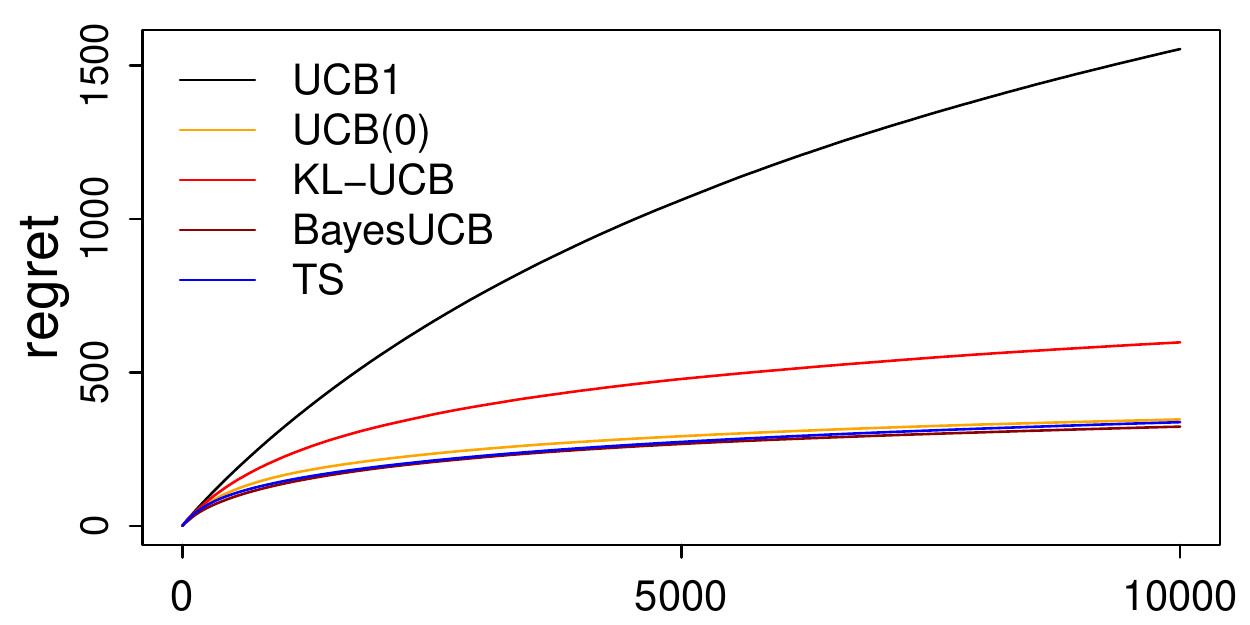}
\includegraphics[keepaspectratio,width=0.495\linewidth]{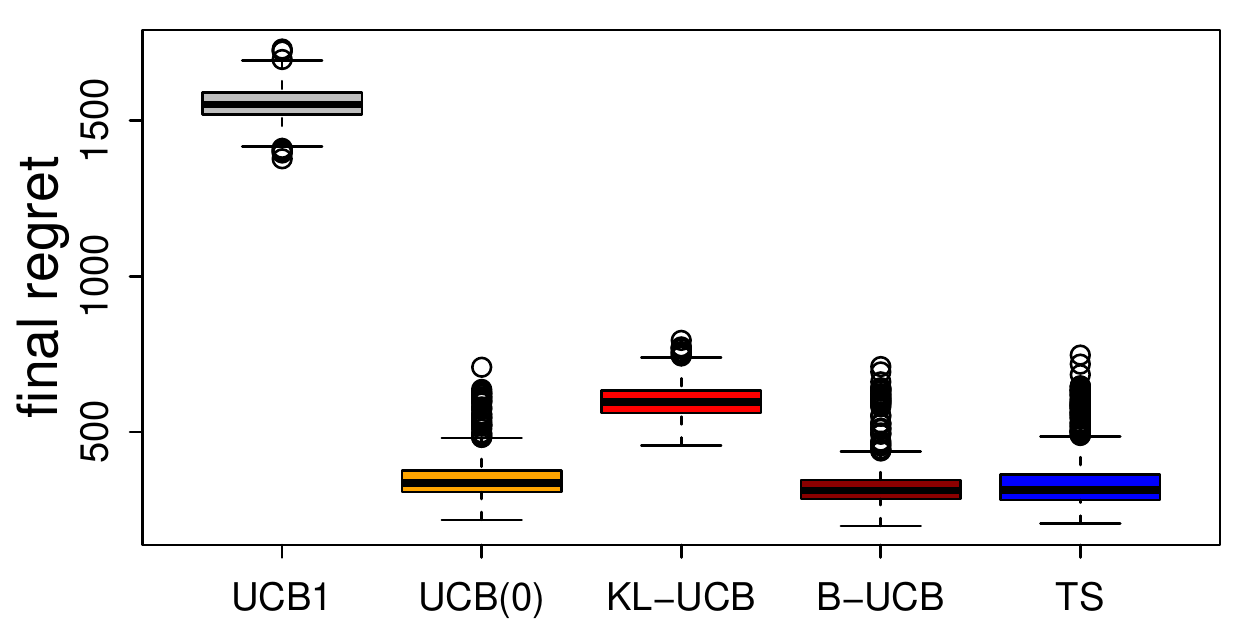}
\caption{Performance of UCB algorithms on the $50$-armed Gaussian bandit with $\mu_k \sim \mathcal{N}(1, 0.04)$, as in \cref{fig:Gaussian-armed}(a). Left column: 
Regret performance as a function of round $n$.
Right column: Distribution of the regret at the final round. 
``UCB(0)" denotes the extreme case of \reucb, i.e., \reucbsinf by taking $w_k=1$ that behaves as \ucb with $a=1$. ``B-UCB" in the right figure denotes BayesUCB. 
The results are summarized over 1000 runs. 
}
\label{fig:Gaussian-Others}
\vspace{-0.3cm}
\end{figure}

\end{document}